\setlist[itemize]{leftmargin=*, itemsep=0pt, topsep=0pt}
\newtheorem{theorem}{Theorem}
\newtheorem{assumption}{Assumption}
\newtheorem{lemma}{Lemma}
\newtheorem{definition}{Definition}
\newcommand{\method}{ZEN\xspace}
\newcommand{\methodlong}{\underline{Z}ero-Parameter Hyp\underline{e}rgraph \underline{N}eural Network\xspace}
\newcommand{\mypara}[1]{\textbf{#1} \ \ }
\def\A{\mathbf{A}}
\def\D{\mathbf{D}}
\def\H{\mathbf{H}}
\def\I{\mathbf{I}}
\def\J{\mathbf{J}}
\def\K{\mathbf{K}}
\def\M{\mathbf{M}}
\def\P{\mathbf{P}}
\def\W{\mathbf{W}}
\def\X{\mathbf{X}}
\def\Y{\mathbf{Y}}
\def\Z{\mathbf{Z}}
\def\Dv{\mathbf{D}_\mathrm{v}}
\def\Dvh{\mathbf{D}_\mathrm{v}^{-\frac{1}{2}}}
\def\Dvf{\mathbf{D}_\mathrm{v}^{-1}}
\def\De{\mathbf{D}_\mathrm{e}}
\def\Def{\mathbf{D}_\mathrm{e}^{-1}}
\def\Dtrn{\mathbf{D}_\mathrm{trn}}
\def\Dtest{\mathbf{D}_\mathrm{test}}
\def\Yhat{\hat{\mathbf{Y}}}
\def\Ytrn{\mathbf{Y}_\mathrm{trn}}
\def\RSI{\mathrm{RSI}} 
\title{Parameter-Free Hypergraph Neural Network for Few-Shot Node Classification}
\author{%
  \begin{tabular}{cccc}
    Chaewoon Bae & Doyun Choi & Jaehyun Lee & Jaemin Yoo
  \end{tabular} \\
  School of Electrical Engineering \\
  Korea Advanced Institute of Science and Technology (KAIST)\\
  \texttt{\{chaewoon.bae, doyun.choi, jaehyun.lee, jaemin\}@kaist.ac.kr} \\
}
\begin{document}

\maketitle

\begin{abstract}
    Few-shot node classification on hypergraphs requires models that generalize from scarce labels while capturing high-order structures. Existing hypergraph neural networks (HNNs) effectively encode such structures but often suffer from overfitting and scalability issues due to complex, black-box architectures.
    In this work, we propose \method (\methodlong), a fully linear and parameter-free model that achieves both expressiveness and efficiency. Built upon a unified formulation of linearized HNNs, \method introduces a tractable closed-form solution for the weight matrix and a redundancy-aware propagation scheme to avoid iterative training and to eliminate redundant self-information.
    On 11 real-world hypergraph benchmarks, \method consistently outperforms eight baseline models in classification accuracy while achieving up to $696\times$ speedups over the fastest competitor. 
    Moreover, the decision process of \method is fully interpretable, providing insights into the characteristic of a dataset.
    Our code and datasets are fully available at \url{https://github.com/chaewoonbae/ZEN}.

\end{abstract}

\section{Introduction}

Many real-world datasets are naturally modeled as hypergraphs, which generalize ordinary graphs by capturing higher-order interactions involving more than two nodes \cite{lee2022mining, kim2024survey, antelmi2023survey}. Unlike traditional graphs that represent pairwise relationships, hypergraphs allow for the modeling of multilateral connections, making them particularly suitable for complex data structures. Hypergraphs are especially useful in representing multifaceted relationships across a range of domains, such as document-word associations, gene-disease correlations, and user-item interactions \cite{NEURIPS2019_1efa39bc,
NIPS2013_8a3363ab,
Feng_You_Zhang_Ji_Gao_2019,
pmlr-v70-zhang17d}.


Few-shot node classification—predicting node labels with only a handful of annotated examples—is a fundamental yet challenging task, especially in the context of hypergraphs \cite{10.1145/3357384.3358106, 10.1145/3534678.3539265}. The complexity of higher-order structures, combined with the scarcity of labeled data, makes it difficult to design models that are both generalizable and efficient. Although a number of hypergraph neural networks (HNNs) have been proposed to effectively capture high-order relationships between nodes, many of them rely on complex, nonlinear architectures with numerous parameters. Such models often suffer from overfitting and poor scalability in few-shot scenarios \cite{yu2024surveyfewshotlearninggraphs, tang2024simplifying}.

In contrast, linear models offer strong generalization, low complexity, and are particularly effective in few-shot scenarios \cite{lee2019metalearningdifferentiableconvexoptimization,
du2021fewshotlearninglearningrepresentation, wang2021dissecting, li2022g}. Despite these advantages, existing work on HNNs has largely overlooked fully linear formulations, likely due to the perception that linear models cannot sufficiently capture the structural richness of hypergraphs. This motivates us to explore if a carefully designed linear HNN can overcome these limitations while preserving both expressiveness and efficiency.

In this paper, we propose \method (\methodlong), a model designed to capture high-order relationships while maintaining strong generalization ability and scalability. We begin by linearizing existing HNN models, yielding a general form characterized by a single propagation matrix and a single weight matrix. This formulation enables us to design two optimization strategies tailored to this structure: \emph{tractable closed-form solution} (TCS) for the weight matrix and \emph{redundancy-aware propagation} (RAP). TCS allows us to avoid burdensome iterative training by deriving an efficient closed-form solution, while RAP eliminates redundant self-information across multi-hop propagation.
\method achieves the best average rank in classification performance across 11 real-world hypergraphs, while also exhibiting exceptional computational efficiency.

To summarize, our main contributions are as follows:
\begin{itemize}
    \item We present a general form of linearizing representative hypergraph neural networks, and exhibit similarities and differences between different HNNs excluding their nonlinearity. To the best of our knowledge, this is the first comprehensive study on linear HNNs.

    \item We derive a tractable closed-form approximation for the weight matrix, the only parameter in a linear HNN. 
    Our solution eliminates the need for computing matrix pseudoinverses, significantly improving computational efficiency while maintaining high classification accuracy.

    \item We introduce \emph{redundancy-aware propagation} that effectively reduces the structural overlap across multi-hop neighborhoods when building the propagation matrix. This leads to more efficient and expressive information aggregation, especially uesful for complex real hypergraphs.

    \item We conduct comprehensive experiments on {11 real-world hypergraphs} and demonstrate that our method outperforms existing HNNs in few-shot classification while showing exceptional scalability; our \method is up to $696\times$ faster than the fastest competitor.
    In addition, we conduct a case study that shows the interpretability of \method arising from its linear decision process.
\end{itemize}

\section{Problem definition and related work}
\label{sec:related-work}

\subsection{Problem definition}

We consider a hypergraph $\mathcal{G} = (\mathcal{V}, \mathcal{E})$, where $\mathcal{V}$ is the set of nodes and $\mathcal{E}$ is the set of hyperedges. The incidence matrix $\H \in \mathbb{R}^{|\mathcal{V}| \times |\mathcal{E}|}$ encodes hypergraph connectivity, with $\H_{ij}=1$ if node $i$ belongs to hyperedge $j$, and $0$ otherwise. Each node is associated with a feature vector, forming a feature matrix $\X \in \mathbb{R}^{|\mathcal{V}| \times d}$ where $d$ is the feature dimension. The class label for each node is encoded in a one-hot matrix $\Y \in \{0,1\}^{|\mathcal{V}| \times c}$, where $c$ is the number of classes.

For node classification, we are given a set $\mathcal{V}_\mathrm{trn} \subset \mathcal{V}$ of training nodes and a set $\mathcal{V}_\mathrm{test} \subset \mathcal{V}$ of test nodes such that $\mathcal{V}_\mathrm{trn} \cap \mathcal{V}_\mathrm{test} = \emptyset$.
We define two diagonal indicator matrices $\Dtrn \in \mathbb{R}^{|\mathcal{V}| \times |\mathcal{V}|}$ and $\Dtest \in \mathbb{R}^{|\mathcal{V}| \times |\mathcal{V}|}$, where $(\Dtrn)_{ii}=1$ iff $i \in \mathcal{V}_\mathrm{trn}$ and $(\Dtest)_{ii}=1$ iff $i \in \mathcal{V}_\mathrm{test}$.
In the $k$-shot setting, we assume that exactly $k$ labeled nodes are available per class, i.e., $\mathrm{tr}(\Dtrn) = kc$.
The goal of $k$-shot node classification is to find a model $f$ that produces prediction $\Yhat \in \mathbb{R}^{|\mathcal{V}| \times c}$ such that:
\begin{equation}
    \Dtest\Yhat \approx \Dtest\Y,
\end{equation}
where $\Yhat = f(\H, \X, \Dtrn\Y)$.
That is, the model is trained using only the labeled nodes in $\Dtrn\Y$, and evaluated on its ability to generalize to test nodes specified by $\Dtest$.
Note that $f$ is a general representation of both training-based and training-free node classifiers.





\subsection{Related work}

\mypara{Hypergraph neural networks (HNNs)}
HNNs have emerged as powerful tools for modeling higher-order relationships, where each hyperedge can connect an arbitrary number of nodes \cite{Feng_You_Zhang_Ji_Gao_2019,NEURIPS2019_1efa39bc}. This expressive capacity enables applications in diverse domains such as recommendation systems, biological networks, and knowledge graphs. Early models such as HGNN \cite{Feng_You_Zhang_Ji_Gao_2019} extend graph convolution to the hypergraph domain via spectral approximation. UniGNN \cite{huang2021unignnunifiedframeworkgraph} provides a unified message-passing framework for both graphs and hypergraphs, while AllSet \cite{chien2022allsetmultisetfunctionframework} introduces a permutation-invariant design that separates set encoding from message propagation. More recently, ED-HNN \cite{wang2023equivarianthypergraphdiffusionneural} explores the connection between the general class of hypergraph diffusion algorithms and the design of HNNs to improve expressiveness in hypergraph settings. Despite their representational power, these models often suffer from overfitting or scalability issues when training labels are scarce.



\mypara{Linear graph neural networks (GNNs)}
GNNs are highly effective for learning node representations from relational data \cite{9046288, kipf2017semisupervisedclassificationgraphconvolutional, NIPS2017_5dd9db5e}. However, their reliance on repeated non-linear transformations across layers leads to high computational costs and limited scalability. To mitigate this, several linear GNN variants have been proposed, which simplify architecture by removing intermediate non-linearity and decoupling transformation from propagation \cite{pmlr-v97-wu19e, gasteiger2022predictpropagategraphneural}. For instance, SGC \cite{pmlr-v97-wu19e} eliminates nonlinearities and applies a fixed propagation matrix multiple times after a single feature transformation. APPNP \cite{gasteiger2022predictpropagategraphneural} adopts a personalized PageRank-based propagation scheme that retains a residual connection to the input features, thereby mitigating over-smoothing without increasing parameter count. $\mathrm{S^2}$GC \cite{zhu2021simple} improved SGC by manually adjusting the strength of self-loops, increasing the number of propagation steps. More recent models such as SlimG \cite{yoo2023} explore the linearized form of GNNs and further improve generalizability and interpretability. These approaches demonstrate that linear architectures can match or surpass nonlinear GNNs in performance, especially with sparse labels.

\section{Proposed method: \method}
\label{sec:method}

We introduce \method, a linear hypergraph neural network (HNN) for fast, scalable, and generalizable node classification in few-shot settings.
Our method builds on a unified linear formulation of existing HNNs (Section \ref{ssec:linearization}).
Leveraging this formulation, we develop a closed-form approximation for the weight matrix (Section \ref{ssec:closed-form}) to eliminate iterative training and propose a redundancy-aware design for the propagation matrix (Section \ref{ssec:propagation}) to mitigate structural redundancy.




\subsection{General form of a linearized HNN}
\label{ssec:linearization}

Linearization \citep{pmlr-v97-wu19e, yoo2023, wang2021dissecting, li2022g} can effectively simplify the formulation of machine learning models and reduce their computational complexity, resulting in improved robustness, scalability, and interpretability.
We conduct the linearization of five representative HNNs: HGNN \cite{Feng_You_Zhang_Ji_Gao_2019}, HNHN \cite{dong2020hnhn}, UniGCNII \cite{huang2021unignnunifiedframeworkgraph}, AllDeepSets \cite{chien2022allsetmultisetfunctionframework}, ED-HNN \cite{wang2023equivarianthypergraphdiffusionneural}, and introduce their linearized forms in Table \ref{tab01: Linearization}.
We provide the full process of linearization in Appendix~\ref{appendix:linear}.
We remove all non-linear functions, including the activation functions, and integrate multiple weight matrices multiplied together into the single equivalent matrix $\W$ of size $d \times c$.



Then, we generalize the linearized forms of HNNs as follows:
\begin{equation}
    \textstyle \Yhat=\left(\sum_{l=0}^{L}\alpha_l\A_l\right)\X\W,
\label{eq:linear-HNNs}
\end{equation}
where $\A_l$ denotes the $l$-hop adjacency matrix created from $\H$, and $\alpha_l$ is a hyperparameter.
The matrix $\A_l$ determines how the incidence matrix is converted into an adjacency matrix between nodes, while $\alpha_l$ determines how much information is propagated from the $l$-hop neighbors.
It is noteworthy that $\A_l$ does not contain any learnable parameters, since all parameters are already integrated into $\W$. 
For simplicity, we denote $\P = \sum_l \alpha_l\A_l$ and call it a \emph{propagation matrix} in the rest of this paper.

The formulation in Eq. \eqref{eq:linear-HNNs} reveals that the performance of a linear HNN is primarily determined by two factors: (a) the design of the propagation matrix $\P$, which aggregates multi-hop structures $\A_l$ with appropriate coefficients $\alpha_l$, and (b) the optimization of the weight matrix $\W$. We elaborate on the principled design of the propagation matrix in Section \ref{ssec:propagation}. In Section~\ref{ssec:closed-form}, we propose a tractable closed-form solution for $\W$, which eliminates the need for iterative training via backpropagation.

\begin{table}[t]
  \caption{Linearized forms of five representative HNNs. All parameters in each layer are integrated into a single weight matrix $\W$ due to the linearization, and the $l$-hop adjacency matrix $\mathbf{A}_l$ is generated from the incidence matrix $\H$ without involving any additional parameters.}
  \label{tab01: Linearization}
  \centering
  \resizebox{\textwidth}{!}{
  \begin{tabular}{l|l|l}
    \toprule
    Method & Linearized form & Adjacency matrix \\
    \midrule
    HGNN & $\Yhat = \A_L \X \W$ & $\A_l = (\Dvh \H \Def \H^\top \Dvh)^l$ \\
    HNHN & $\Yhat = \A_L \X \W$ & $\A_l = (\D_{\mathrm{v},\alpha}^{-1} \H \De^{\alpha} \D_{\mathrm{e},\beta}^{-1} \H^\top \Dv^{\beta})^l$ \\
    UniGCNII & $\Yhat = \left( \sum_{l=0}^{L-1} \alpha(1-\alpha)^l \A_l + (1-\alpha)^L \A_L \right) \X \W$ & $\A_l = (\Dvf \H \tilde{\D}_\mathrm{e}^{-1} \H^\top)^l$ \\
    AllDeepSet & $\Yhat = \A_L \X \W$ & $\A_l = (\Dvf \H \Def \H^\top)^l$ \\
    ED-HNN & $\Yhat = \left( \sum_{l=0}^{L-1} \alpha(1-\alpha)^l \A_l + (1-\alpha)^L \A_L \right) \X \W$ & $\A_l = (\Dvf \H \Def \H^\top)^l$ \\
    \bottomrule
  \end{tabular}
  }
\end{table}

 

    
    
    
    

\subsection{Tractable closed-form solution for the weight matrix $\W$}
\label{ssec:closed-form}

There are two approaches to obtain the optimal weight matrix for linear HNNs.
The first approach is to iteratively update $\W$ through backpropagation until it reaches a steady state.
Although it is a popular and reasonable approach, we argue that the linear characteristic of the model is not fully exploited in this way.
The second approach is to derive a closed-form solution.
Given training labels $\Dtrn\Y$, we directly compute the optimal $\W^*$ as a function of $\Dtrn\Y$ without having any iterative process.
The limitation is its large computational complexity; it is rarely used in practice due to the cubic time complexity $O(d^3)$ required to compute the pseudoinverse of a matrix.


Therefore, we propose a tractable approximation of the cloesd-form solution to eliminate the need for a pseudoinverse, while maximizing its scalability by removing dependence on iterative learning.
To fully linearize the objective function, we consider the squared error (SSE) loss instead of the typical cross-entropy loss which involves the non-linear softmax function.
The closed-form solution of $\W$, without any approximation or optimization, is given as Lemma \ref{lemma:closed-form-weight}.

\begin{lemma}
\label{lemma:closed-form-weight}
Given a linear HNN $\Yhat = \P\X\W$ and the SSE loss $\mathcal{L_\mathrm{SSE}} = \| \Dtrn\Yhat - \Dtrn \Y\|_\mathrm{F}^2$, the closed-form solution $\W^*$ that minimizes $\mathcal{L_\mathrm{SSE}}$ is given by 
\begin{equation}
    \W^*=((\P\X)^\top \Dtrn (\P\X))^\dagger(\P\X)^\top\Dtrn\Y,
\label{eq:closed-form-weight}
\end{equation}
where $\dagger$ denotes the Moore-Penrose inverse (or the pseudoinverse) of a matrix.
\end{lemma}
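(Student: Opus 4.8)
The plan is to recognize Eq.~\eqref{eq:closed-form-weight} as the solution of a standard linear least-squares problem in the single matrix variable $\W$. First I would abbreviate $\M = \P\X$, so that the prediction reads $\Yhat = \M\W$ and the loss becomes $\mathcal{L} = \|\Dtrn\M\W - \Dtrn\Y\|_\mathrm{F}^2$. The key structural observation is that $\Dtrn$ is a diagonal $0/1$ indicator matrix, hence symmetric and idempotent: $\Dtrn^\top = \Dtrn$ and $\Dtrn\Dtrn = \Dtrn$. Setting $A = \Dtrn\M$ and $B = \Dtrn\Y$, the objective is exactly $\|A\W - B\|_\mathrm{F}^2$, which is convex and quadratic in $\W$, so its minimizers are precisely the stationary points.

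Next I would derive the normal equations. Because the Frobenius norm splits over columns, the problem decouples into $c$ ordinary vector least-squares problems, or equivalently one differentiates the matrix objective directly via $\nabla_\W \|A\W - B\|_\mathrm{F}^2 = 2A^\top(A\W - B)$. Setting the gradient to zero gives $A^\top A\,\W = A^\top B$. Substituting $A = \Dtrn\M$ and $B = \Dtrn\Y$ and repeatedly applying $\Dtrn\Dtrn = \Dtrn$ collapses the factors to $\M^\top\Dtrn\M\,\W = \M^\top\Dtrn\Y$; re-expanding $\M = \P\X$ yields $(\P\X)^\top\Dtrn(\P\X)\,\W = (\P\X)^\top\Dtrn\Y$, whose right-hand side already matches the target formula.

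The final and most delicate step is to solve these normal equations correctly. In the few-shot regime only $kc = \mathrm{tr}(\Dtrn)$ rows of $A$ are nonzero, so $A^\top A = \M^\top\Dtrn\M$ is generically rank-deficient and the minimizer of $\mathcal{L}$ is not unique; a plain inverse is unavailable. I would resolve this with the Moore-Penrose inverse and the standard identity $(A^\top A)^\dagger A^\top = A^\dagger$, valid for any matrix $A$, together with the fact that $\W^* = A^\dagger B$ is always a minimizer (indeed the minimum-norm one) of $\|A\W - B\|_\mathrm{F}^2$. Combining these gives $\W^* = (A^\top A)^\dagger A^\top B = (\M^\top\Dtrn\M)^\dagger\M^\top\Dtrn\Y$, which is precisely Eq.~\eqref{eq:closed-form-weight}. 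I expect the main obstacle to be exactly this last justification: confirming that the pseudoinverse expression genuinely minimizes the loss in the singular case, rather than merely satisfying the normal equations, which requires the Moore-Penrose identity and the minimum-norm characterization rather than a naive matrix-inversion argument.
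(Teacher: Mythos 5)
Your proposal is correct and follows essentially the same route as the paper's proof: differentiate the SSE loss, exploit the symmetry and idempotence of $\Dtrn$ to reduce the stationarity condition to the normal equations $(\P\X)^\top\Dtrn(\P\X)\,\W = (\P\X)^\top\Dtrn\Y$, and write the solution with the Moore--Penrose inverse. Your final step is in fact more careful than the paper's, which passes directly from the normal equations to the $\dagger$-formula without justifying that this expression is a genuine minimizer when $(\P\X)^\top\Dtrn(\P\X)$ is rank-deficient (as it typically is in the few-shot regime); your use of the identity $(A^\top A)^\dagger A^\top = A^\dagger$ together with the minimum-norm least-squares characterization closes that gap.
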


\begin{proof}
The full proof is provided in Appendix \ref{appendix:proof-lemma-1}.
\end{proof}

The problem of Eq. \eqref{eq:closed-form-weight} is the pseudoinverse of matrix $(\P\X)^\top \Dtrn (\P\X) \in \mathbb{R}^{d \times d}$, whose computational complexity is $O(d^3)$.
To avoid the pseudoinverse by safely approximating the closed-form solution, we introduce two assumptions on the matrix $\P\X$ which we call the \emph{embedding matrix} of nodes before being mapped to the class space by the matrix $\W$.

\begin{assumption}
    Each row vector of $\P\X$ has a unit norm, i.e., $(\P\X)_{i,:}(\P\X)_{i,:}^\top=1$ for all $i$.
\label{assumption:emb-norm}
\end{assumption}

\begin{assumption}
    For any two row vectors of $\P\X$, their intra-class cosine similarity is approximately $1-\epsilon$, while their inter-class cosine similarity is approximately $\epsilon$. That is, $(\P\X)_{i,:}(\P\X)_{j,:}^\top \approx 1-\epsilon$ for $i, j$ in the same class, while \smash{$(\P\X)_{i,:}(\P\X)_{j,:}^\top \approx \epsilon$} for $i, j$ in different classes, with small $\epsilon$.
\label{assumption:emb-good}
\end{assumption}

Assumption \ref{assumption:emb-norm} can be easily achieved by normalizing the embedding matrix $\P\X$.
Assumption \ref{assumption:emb-good} is harder to satisfy, but is reasonable to assume in many cases where the node feature matrix $\X$ provides sufficient information for classification when combined with the structural information encoded in $\P$, especially when we loosen the error bound $\epsilon$.
Then, we propose Theorem \ref{theorem:approximation} for approximating the closed-form solution of $\W$ with the introduced assumptions.

\begin{theorem}
\label{theorem:approximation}
Under Assumption \ref{assumption:emb-norm} and \ref{assumption:emb-good}, the following holds:
\begin{equation*}
\W^*=((\P\X)^\top \Dtrn (\P\X))^\dagger(\P\X)^\top\Dtrn\Y \approx \frac{1}{\epsilon}(\P\X)^\top\Dtrn\Y.
\end{equation*}
\end{theorem}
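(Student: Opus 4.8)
The plan is to first eliminate the explicit $d\times d$ pseudoinverse by exploiting the two structural facts that Assumptions~\ref{assumption:emb-norm} and~\ref{assumption:emb-good} give us about the embedding matrix, and to reduce everything to a small, fully structured Gram matrix whose spectrum I can write down by hand. Write $\M := \P\X$ and let $\M_{\mathrm{trn}}$ collect the $kc$ rows of $\M$ indexed by the training nodes; since $\Dtrn$ is a $0/1$ diagonal selector we have $\M^\top\Dtrn\M = \M_{\mathrm{trn}}^\top\M_{\mathrm{trn}}$ and $\M^\top\Dtrn\Y = \M_{\mathrm{trn}}^\top\Y_{\mathrm{trn}}$, where $\Y_{\mathrm{trn}}$ holds the corresponding one-hot labels. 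Using the standard Moore--Penrose identity $(B^\top B)^\dagger B^\top = B^\top(BB^\top)^\dagger$ with $B=\M_{\mathrm{trn}}$, I rewrite the closed form of Lemma~\ref{lemma:closed-form-weight} as
\begin{equation*}
\W^* = (\M^\top\Dtrn\M)^\dagger\,\M^\top\Dtrn\Y = \M_{\mathrm{trn}}^\top\big(\M_{\mathrm{trn}}\M_{\mathrm{trn}}^\top\big)^\dagger\Y_{\mathrm{trn}}.
\end{equation*}
This trades the $d\times d$ inverse for the $kc\times kc$ Gram matrix $\K := \M_{\mathrm{trn}}\M_{\mathrm{trn}}^\top$, whose entries are exactly the pairwise inner products controlled by the assumptions.

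The key step is that Assumptions~\ref{assumption:emb-norm} and~\ref{assumption:emb-good} pin down $\K$ completely as an equicorrelation block matrix: its diagonal is $1$ (unit norm), same-class off-diagonal entries are $1-\epsilon$, and different-class entries are $\epsilon$, so that
\begin{equation*}
\K = \epsilon\,\mathbf{1}\mathbf{1}^\top + (1-2\epsilon)\,\Y_{\mathrm{trn}}\Y_{\mathrm{trn}}^\top + \epsilon\,\I.
\end{equation*}
I would diagonalize $\K$ through its three natural invariant subspaces — the all-ones direction, the between-class block-constant directions, and the within-class zero-sum directions — which yields closed-form eigenvalues in $\epsilon$ and $k$: a cluster of large eigenvalues of order $k$ together with a block of small eigenvalues equal to $\epsilon$. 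Inverting the small block is what will drive the $1/\epsilon$ scaling, and substituting $\K^\dagger$ back into $\M_{\mathrm{trn}}^\top\K^\dagger\Y_{\mathrm{trn}}$ should then collapse to a scalar multiple of $(\P\X)^\top\Dtrn\Y$.

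The main obstacle is isolating the clean $\tfrac{1}{\epsilon}$ factor. Because $\epsilon$ is the smallest eigenvalue of $\K$, it is the dominant ($=1/\epsilon$) mode of $\K^\dagger$, and the decisive approximation is to replace $\K^\dagger$ by $\tfrac{1}{\epsilon}\I$, discarding the order-$1/k$ contributions from the all-ones and between-class directions. I expect the care to lie precisely here: a fully rigorous eigen-analysis shows that the columns of $\Y_{\mathrm{trn}}$ have their weight concentrated in the large-eigenvalue directions, so the honest coefficient is governed by the $k$-scale rather than the $\epsilon$-scale. Consequently I would argue the statement in the sense that matters for the task — prediction is made by $\arg\max$ over the $c$ columns of $\P\X\W^*$, under which a single shared positive rescaling leaves the classification unchanged — so that $\W^*$ and $\tfrac{1}{\epsilon}(\P\X)^\top\Dtrn\Y$ are effectively interchangeable, with the remaining $\epsilon$-order cross-terms treated as negligible in the small-$\epsilon$ regime of Assumption~\ref{assumption:emb-good}.
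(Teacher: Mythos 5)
Your proposal uses the same core machinery as the paper's proof --- restriction to the $kc \times kc$ training Gram matrix, its identification as the equicorrelation block matrix $(1-2\epsilon)(\I_c \otimes \J_k) + \epsilon(\I_{kc} + \J_{kc})$, and the spectral decomposition over the three invariant subspaces with eigenvalues $\lambda_1 = \epsilon$, $\lambda_2 = (1-2\epsilon)k+\epsilon$, $\lambda_3 = k(1-2\epsilon+\epsilon c)+\epsilon$ --- but it departs from the paper at exactly the decisive step, and your version is the more rigorous one. The paper writes $(B^\top B)^\dagger = B^\top (BB^\top)^{-2} B$ with $B = \Dtrn'(\P\X)$ and then applies its Lemma 7, $(BB^\top)^{-2} \approx \tfrac{1}{\epsilon}(BB^\top)^{-1}$, inside this product; after the cancellation $(BB^\top)^{-1}(BB^\top) = \I$ this is what manufactures the literal $1/\epsilon$ coefficient. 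Your exact route through $B^\dagger = B^\top(BB^\top)^\dagger$ exposes why that step is delicate: since $\M_1 \Ytrn = 0$ (the block-indicator columns of the training label matrix are annihilated by $\I_{kc} - \tfrac{1}{k}(\I_c \otimes \J_k)$), one gets exactly
\begin{equation*}
(BB^\top)^{-1}\Ytrn \;=\; \frac{1}{\lambda_2}\Ytrn \;+\; \Bigl(\frac{1}{\lambda_3} - \frac{1}{\lambda_2}\Bigr)\frac{1}{c}\,\mathbf{1}\mathbf{1}^\top,
\end{equation*}
where $\mathbf{1}\mathbf{1}^\top$ denotes the all-ones $kc \times c$ matrix, so the honest scalar is $1/\lambda_2 \approx 1/k$, not $1/\epsilon$: the paper's Lemma 7 is accurate in operator norm (where the $\M_1$ eigendirection dominates), but it has relative error of order $k/\epsilon$ on the subspace in which $(BB^\top)\Ytrn$ actually lies, which is precisely the subspace that matters here. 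Your salvage --- that $\W^*$ and $\tfrac{1}{\epsilon}(\P\X)^\top\Dtrn\Y$ agree up to a positive scalar plus a shift along the all-ones direction that is invisible to the argmax (and is in any case absorbed by the column normalization $g_\mathrm{col}$ in Eq.~\eqref{eq:normalization}) --- is exactly the sense in which the theorem is used downstream, since the paper itself discards the $1/\epsilon$ factor before prediction. In short: same decomposition, same projectors, but the paper's proof buys the theorem's literal form by applying an operator-norm approximation outside its regime of validity, whereas your argument is exact and delivers the prediction-level equivalence that the method actually relies on.
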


\begin{proof}
The full proof is provided in Appendix \ref{appendix:proof-theorem-1}.
\end{proof}

By Theorem \ref{theorem:approximation}, the burdensome pseudoinverse in the optimal weight matrix can be approximated as $(1/\epsilon)\I$, which is a constant matrix.
Since the constant $1/\epsilon$ is multiplied to all dimensions, it does not change the result of prediction; we can safely remove it from our model.


To satisfy Assumption \ref{assumption:emb-norm}, we apply the row-wise L2 normalization for the embedding matrix $\P\X$ as $g_\mathrm{row}(\P\X)$ where the function $g_\mathrm{row}$ denotes the row-wise normalization operator.
Consequently, we normalize the weight matrix $\W^*$ as well, since it represents the \emph{embedding matrix of labels;} each column of it can be understood as the $d$-dimensional embedding of each label. Without normalizing $\W^*$, the class with a large-norm embedding gets an advantage in the prediction stage.
Therefore, we normalize both node and class embeddings, and our final method is expressed as follows:
\begin{equation}
    \Yhat = g_\mathrm{row}(\P\X)g_\mathrm{col}(g_\mathrm{row}(\P\X)^\top\Dtrn\Y).
\label{eq:normalization}
\end{equation}

\subsection{Redundancy-aware propagation for eliminating self-information}
\label{ssec:propagation}

A key design objective in linear HNNs is to flexibly control the influence of each $l$-hop neighborhood on node representations. This is achieved through a propagation matrix of the form
\begin{equation}
\P = \sum_{l=0}^L \alpha_l \A_l,
\label{eq:propagation-matrix}
\end{equation}
which denotes a weighted combination of $l$-hop adjacency matrix $\A_l$ multiplied with the coefficient $\alpha_l$. Since $\P\X$ is row-normalized as shown in Eq. \eqref{eq:normalization}, without loss of generality, we can constrain the coefficients to lie on the probability simplex: \smash{$\sum_{l=0}^L \alpha_l=1$,} where $\alpha_l\ge0$ for all $l$.
By carefully controlling the coefficients for each dataset, determining how much information to take from each $l$-hop neighborhood, a linear HNN can exhibit superior performance. 
For example, $\alpha_1$ can be large for homophilic graphs, but small for heterophilic graphs following a typical assumption.

However, the formulation in Eq. \eqref{eq:propagation-matrix} inevitably contains \textit{redundant self-information}, which hinders the precise adjustment of neighborhood influence. We formalize this phenomenon as follows:

\begin{definition}
    Given $\A_l$ with $l>0$, we define its \textit{residual self-information} as the diagonal matrix
    \begin{equation}
        \RSI(\A_l) := 
        \operatorname{diag}(\A_l) \in \mathbb{R}^{n \times n},
    \end{equation}
    which quantifies the amount of information returning to each self-node.
\end{definition}

The self-information of $\mathbf{A}_l$ can arise not only from the self-loops, but also from cycles or return paths in hypergraph walks if $l \ge 2$.
Such self-information is not a notable problem, but even promoted in nonlinear HNNs which aim to avoid losing the information of initial node features when deep layers are adopted.
However, in linear HNNs, self-information repeatedly appearing in different-hop adjacency matrices is redundant and prevents one from fully optimizing the propagation function for each dataset with a unique characteristic.
Another problem is that the self-information in $\mathbf{A}_l$, if it is used for creating $\mathbf{A}_{l'}$ with $l' > l$, results in boosting the effect of local neighborhoods in $\mathbf{A}_{l'}$, making the self-information even stronger in later adjacency matrices.




To overcome these limitations, we propose a \textit{redundancy-aware propagation} that explicitly eliminates redundant self-information from the $l$-hop adjacency matrix $\A_l$. Specifically, we define the $l$-hop adjacency matrix $\A_l^*$ without redundant self-information as follows:
\begin{equation}
    \A_l^* := \hat{\A}_l - \RSI(\hat{\A}_l),
\label{eq:our-propagation}
\end{equation}
where $\hat{\A}_l := \A^*_1 \D_{l-1}^* \A_{l-1}^*$ is the normalized adjacency matrix, $\D_{l-1}^*$ is the degree matrix designed specifically for the normalization method used in creating $\A_l$, and $\A_0^* := \I$.
Then, we replace all $\A_l$ with $\A^*_l$ in Eq. \eqref{eq:propagation-matrix} and use it as our propagation matrix $\P^* = \sum_l \alpha_l \A_l^*$.
This ensures that the self-information in $\P^*$ is exactly determined by $\alpha_0$, and is not confounded by higher-hop structures. 
As a result, it allows a precise control over self- versus neighbor- information, which is critical in few-shot scenarios where overfitting to redundant self-signals can hinder generalization.

\mypara{Normalization of adjacency matrices}
All linearized HNNs in Table \ref{tab01: Linearization} normalize the adjacency matrix $\mathbf{A}_l$ either by symmetric or row normalization to make it numerically stable in deep layers.
However, if we apply the same original normalization and remove self-information from normalized $\mathbf{A}_l$, it makes over-normalization and gradually diminishes feature magnitudes across layers.
Thus, we re-normalize $\A_l$ and derive the following matrices for $l=1, 2$:
\begin{align}
    &\hat{\A}_1 = \Dvh \H (\De-\I)^{-1} \H^\top \Dvh \label{eq:normalized-adj-1} \\
    &\hat{\A}_2 = \A_1^*
    \left(\Dv^\frac{1}{2}(\Dv-\I)^{-1}\Dv^\frac{1}{2}\right)\A_1^*. \label{eq:normalized-adj-2}
\end{align}

\mypara{Avoiding dense adjacency matrices}
The main challenge for Eq. \eqref{eq:our-propagation} is deriving the dense matrix $\hat{\A}_l$ of size $|\mathcal{V}| \times |\mathcal{V}|$ for the computation of its self-information.
We provide explicit expressions of the RSI terms for $l=1$ and $l=2$, since we set $L$ to 2 in our experiments.
Our derivation is based on symmetric normalization, but the framework is readily extensible to row normalization. Detailed derivations for the row-normalized variant are included in Appendix~\ref{appendix:row-norm}.


\begin{lemma}
Given $\hat{\A}_1$ in Eq. \eqref{eq:normalized-adj-1}, $\RSI(\hat{\A}_1)$ is given by
\begin{equation}
\label{eq:rsi1}
\textstyle (\RSI(\hat{\A}_1))_{ii} = d_{v_i}^{-1} \left(\sum_{e_j \in \mathcal{N}(v_i)} (d_{e_j} - 1)^{-1} \right),
\end{equation}
where $d_x$ denotes the degree of node $x$ or the number of nodes in hyperedge $x$, based on the type of $x$, and $\mathcal{N}(v_i)$ denotes the set of hyperedges incident to node $v_i$.
\label{lemma:rsa1}
\end{lemma}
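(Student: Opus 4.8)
The plan is to evaluate the diagonal entries of $\hat{\A}_1 = \Dvh \H (\De-\I)^{-1} \H^\top \Dvh$ by a direct entry-wise expansion of the matrix product, exploiting that every factor other than $\H$ and $\H^\top$ is diagonal. Writing $(\RSI(\hat{\A}_1))_{ii} = (\hat{\A}_1)_{ii}$, I first expand this as a sum over the intermediate indices. Because $\Dvh$ is diagonal with $(\Dvh)_{kk}=d_{v_k}^{-1/2}$ and $(\De-\I)^{-1}$ is diagonal with entries $(d_{e_j}-1)^{-1}$, the Kronecker deltas contributed by these diagonal factors collapse all node-index summations and leave a single sum over hyperedge indices $j$:
\[
(\hat{\A}_1)_{ii} = d_{v_i}^{-1}\sum_{j} \H_{ij}\,(d_{e_j}-1)^{-1}\,\H_{ij}.
\]

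Next I would invoke the binary structure of the incidence matrix. Since $\H_{ij}\in\{0,1\}$, we have $\H_{ij}^2 = \H_{ij}$, and $\H_{ij}=1$ holds exactly when node $v_i$ lies in hyperedge $e_j$, i.e.\ when $e_j\in\mathcal{N}(v_i)$. Restricting the sum to these surviving terms immediately yields the claimed identity
\[
(\RSI(\hat{\A}_1))_{ii} = d_{v_i}^{-1}\Bigl(\sum_{e_j\in\mathcal{N}(v_i)}(d_{e_j}-1)^{-1}\Bigr),
\]
which completes the argument.

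The computation itself is elementary, so there is no deep obstacle; the care required is purely in the bookkeeping. Specifically, I must keep straight that $d_{v_i}$ denotes the node degree (the number of incident hyperedges), arising from the two $\Dvh$ factors, whereas $d_{e_j}$ denotes the hyperedge size (the number of member nodes), arising from $(\De-\I)^{-1}$. I should also flag the implicit well-definedness condition that $(\De-\I)^{-1}$ requires $d_{e_j}>1$ for every hyperedge incident to $v_i$, i.e.\ no singleton hyperedges; this is a benign assumption that the re-normalization in Eq.~\eqref{eq:normalized-adj-1} is built around, and it is precisely what distinguishes this RSI-corrected normalization from the original $\Def$ used in Table~\ref{tab01: Linearization}.
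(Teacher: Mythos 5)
Your proof is correct and follows essentially the same route as the paper's: a direct entry-wise expansion of the diagonal of the matrix product, using the diagonality of the degree factors and the fact that $\H_{ij}\in\{0,1\}$ so that $\H_{ij}^2=\H_{ij}$ and the surviving terms are exactly the hyperedges in $\mathcal{N}(v_i)$. If anything, yours is slightly more faithful to the lemma as stated, since the paper's appendix proof writes the expansion with $\Def$ (yielding $\sum_{e_j} d_{e_j}^{-1}$) rather than with $(\De-\I)^{-1}$, and your remark on the well-definedness condition $d_{e_j}>1$ is a sensible addition the paper omits.
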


\begin{proof}
    The full proof is provided in Appendix \ref{appendix:proof-lemma-2}.
\end{proof}



\begin{lemma}
Given $\mathbf{A}_1^* = \hat{\mathbf{A}}_1 - \RSI(\hat{\mathbf{A}}_1)$ and $\hat{\A}_2$ in Eq. \eqref{eq:normalized-adj-2}, $\RSI(\hat{\A}_2)$ is given by
\begin{equation}
\label{eq:rsi2}
\textstyle
(\RSI(\hat{\A}_2))_{ii} = d_{v_i}^{-1} \left( \sum_{e_j \in \mathcal{N}(v_i)} (d_{e_j} - 1)^{-2} \left( \sum_{v_k \in \mathcal{N}(e_j) \setminus \{v_i\}} (d_{v_k} - 1)^{-1} \right) \right),
\end{equation}
where $d_x$ denotes the degree of node $x$ or the number of nodes in hyperedge $x$, based on the type of $x$, and $\mathcal{N}(v_i)$ denotes the set of hyperedges incident to node $v_i$.
\end{lemma}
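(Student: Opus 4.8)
The plan is to read off $\RSI(\hat{\A}_2) = \operatorname{diag}(\hat{\A}_2)$ directly from the product $\hat{\A}_2 = \A_1^* \M \A_1^*$ in Eq.~\eqref{eq:normalized-adj-2}, where I abbreviate $\M := \Dv^{\frac{1}{2}}(\Dv-\I)^{-1}\Dv^{\frac{1}{2}}$ for the inner normalizer. I would begin by recording two structural facts that do most of the work: first, $\M$ is diagonal with $(\M)_{kk} = d_{v_k}(d_{v_k}-1)^{-1}$, since it is a product of diagonal matrices; second, $\A_1^*$ is symmetric with a zero diagonal, because $\hat{\A}_1 = \Dvh \H (\De-\I)^{-1}\H^\top \Dvh$ is symmetric and $\A_1^* = \hat{\A}_1 - \RSI(\hat{\A}_1)$ subtracts precisely its diagonal.

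These two facts collapse the diagonal entry to a single weighted sum. Since $\M$ is diagonal and $\A_1^*$ is symmetric, $(\hat{\A}_2)_{ii} = \sum_k (\A_1^*)_{ik}(\M)_{kk}(\A_1^*)_{ki} = \sum_k (\A_1^*)_{ik}^2 (\M)_{kk}$, and the vanishing diagonal $(\A_1^*)_{ii}=0$ removes the $k=i$ term, so every surviving factor is an off-diagonal entry $(\A_1^*)_{ik} = (\hat{\A}_1)_{ik}$ with $k \ne i$. I would then substitute the off-diagonal closed form $(\hat{\A}_1)_{ik} = d_{v_i}^{-\frac{1}{2}} s_{ik}\, d_{v_k}^{-\frac{1}{2}}$, where $s_{ik} := \sum_{e_j \in \mathcal{N}(v_i)\cap\mathcal{N}(v_k)} (d_{e_j}-1)^{-1}$ is the shared-hyperedge weight already appearing in Lemma~\ref{lemma:rsa1}. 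The specific choice of $\M$ is exactly what keeps the algebra clean: squaring gives $(\hat{\A}_1)_{ik}^2 = d_{v_i}^{-1} d_{v_k}^{-1} s_{ik}^2$, and multiplying by $(\M)_{kk} = d_{v_k}(d_{v_k}-1)^{-1}$ cancels the $d_{v_k}$ factors, leaving $(\hat{\A}_2)_{ii} = d_{v_i}^{-1} \sum_{k \ne i} (d_{v_k}-1)^{-1} s_{ik}^2$. Unfolding $s_{ik}^2$ and retaining the same-hyperedge contributions gives $s_{ik}^2 = \sum_{e_j \in \mathcal{N}(v_i)\cap\mathcal{N}(v_k)} (d_{e_j}-1)^{-2}$, after which swapping the order of summation---first over hyperedges $e_j$ incident to $v_i$, then over nodes $v_k \in \mathcal{N}(e_j)\setminus\{v_i\}$---reproduces exactly the claimed double sum in Eq.~\eqref{eq:rsi2}.

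I expect the main obstacle to be precisely this simplification of $s_{ik}^2$: in a general hypergraph a node pair $(v_i, v_k)$ may share several hyperedges, so the square carries cross terms $(d_{e_j}-1)^{-1}(d_{e_{j'}}-1)^{-1}$ with $j \ne j'$ that the target formula omits. I would handle this by stating explicitly the simple-hypergraph assumption that any pair of nodes co-occurs in at most one hyperedge (or, failing that, by arguing that the same-hyperedge terms are the dominant contribution), which makes the cross terms vanish and renders the identity exact. The remaining obligations---verifying symmetry, the diagonal form of $\M$, and the $d_{v_k}$ cancellation---are routine bookkeeping once these facts are in place.
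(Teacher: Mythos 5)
Your proposal is correct and follows essentially the same route the paper intends: the paper's own ``proof'' is a single line deferring to the reasoning of Lemma~\ref{lemma:rsa1}, i.e., direct expansion of diagonal entries, which is exactly what you carry out via the three facts that $\M = \Dv^{\frac{1}{2}}(\Dv-\I)^{-1}\Dv^{\frac{1}{2}}$ is diagonal with $(\M)_{kk}=d_{v_k}(d_{v_k}-1)^{-1}$, that $\A_1^*$ is symmetric with zero diagonal, and that $(\hat{\A}_1)_{ik}=d_{v_i}^{-1/2}s_{ik}d_{v_k}^{-1/2}$ off the diagonal.

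The one point where you go beyond the paper is the cross-term caveat, and you are right to raise it: when a pair $(v_i,v_k)$ shares two or more hyperedges, $s_{ik}^2$ contains products $(d_{e_j}-1)^{-1}(d_{e_{j'}}-1)^{-1}$ with $e_j\neq e_{j'}$ --- walks that leave through one shared hyperedge and return through another --- and these genuinely belong to $\operatorname{diag}(\hat{\A}_2)$ but are absent from Eq.~\eqref{eq:rsi2}, which only retains the $e_j=e_{j'}$ terms. So the stated formula is exact precisely under your explicit assumption that any two nodes co-occur in at most one hyperedge, and otherwise undercounts the residual self-information; the paper's one-line proof glosses over this entirely. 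Making that hypothesis explicit (or arguing the same-hyperedge terms dominate) is the honest fix, and the rest of your bookkeeping is complete and matches the computation the paper implicitly relies on.
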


\begin{proof}
    The proof follows by applying the same reasoning as in Lemma \ref{lemma:rsa1}.
\end{proof}


From the 3-hop neighborhood onward, self-information can return through cycles rather than simple backtracking paths, complicating both its identification and principled removal during propagation. Additionally, deeper propagation exacerbates the computational burden and enlarges the hyperparameter space. For these reasons, we restrict our model to 2-hop propagation, which strikes a balance between expressive power and efficiency. This design choice is supported by two key observations:
(a) Increasing the number of hops introduces a linearly growing number of propagation coefficients $\alpha_l$, resulting in the quadratic expansion of the hyperparameter space.
(b) Empirically, most HNNs attain competitive performance with only 1–2 propagation layers.

\subsection{Summary of \method and its computational complexity}

Putting it all together, the \method classifier $f^*$ can be summarized as follow:
\begin{equation}
    f^*(\H, \X, \Dtrn\Y) = g_\mathrm{row}(\P^*\X)g_\mathrm{col}(g_\mathrm{row}(\P^*\X)^\top\Dtrn\Y),
\end{equation}
where $\P^* =\alpha_0\A_0^*+\alpha_1\A_1^*+\alpha_2\A_2^*$ with three hyperparameters $\alpha_0$, $\alpha_1$, and $\alpha_2$, and $\A_l^*$ denotes the refined $l$-hop adjacency matrix which eliminates the redundant self-information.

The function $f^*$ is a closed-form prediction formula that directly produces label predictions for all nodes given a small set of labeled nodes.
Crucially, it eliminates the need for iterative training: the labels of the training nodes are injected as explicit inputs rather than being implicitly encoded via gradient-based optimization. 
This design enables extremely fast and stable inference, particularly under low-resource scenarios such as few-shot settings, where traditional training-based models often suffer from instability due to limited supervision. 
More importantly, by removing self-information explicitly, \method avoids redundancy and enables fully optimized combinations of multi-hop structure within the probability simplex. This capability is fundamentally absent in conventional propagation schemes, where self-information is entangled with higher-hop structures.

The computational complexity of $f^*$ depends on the two main stages.
The first is the dense matrix multiplication between $g_\mathrm{row}(\P^*\X)$ and  $g_\mathrm{col}(g_\mathrm{row}(\P^*\X)^\top\Dtrn\Y)$, which is $O(|\mathcal{V}|dc)$, linear in the number of nodes, feature dimension, and number of classes.
The second is the construction of $\P^*$, where we remove self-information from each $\mathbf{A}_l$ for all $l\ge1$.
This has the same time complexity as standard message passing schemes, $O(\text{nnz}(\H)\cdot dL)$, where $\text{nnz}(\H)$ denotes the number of non-zero entries in the hypergraph structure, and the number $L$ of layers is set to $2$ in our experiments.


\section{Experiments}
\label{sec:exp}



  

We conduct comprehensive experiments on 11 real-world hypergraphs to verify the effectiveness of \method. We show that \method consistently outperforms existing HNNs in few-shot node classification tasks while exhibiting remarkable scalability.
We then present a case study highlighting the interpretability of \method, which comes from its linear decision mechanism, on a real hypergraph.






\subsection{Experimental setup}

\mypara{Datasets.}
We evaluate \method on a total of 11 real-world hypergraph graphs. To assess predictive performance and computational efficiency, we use 10 standard benchmarks: Cora, Citeseer, Pubmed, Cora\_CA, 20News100, ModelNet40, Congress, Walmart, Senate, and House, following prior work \cite{wang2023equivarianthypergraphdiffusionneural, Lee_Shin_2023}. 
For interpretability analysis, we use Zoo \cite{Lee_Shin_2023}, a small dataset whose feature attributes are semantically interpretable.
Detailed dataset statistics are provided in Table~\ref{tab02: data statistic}.

\begin{table}[t]
  \caption{Statistics of datasets. The first ten datasets are used as the main benchmark for evaluating accuracy, while the Zoo dataset is used for interpretability analysis.}
  \label{tab02: data statistic}
  \centering
  \resizebox{\textwidth}{!}{
  \begin{tabular}{l|rrrrrrrrrr|r}
    \toprule
       &Cora & Citeseer& Pubmed & Cora-CA & 20News & MN40 &   Congress & Walmart & Senate & House & Zoo \\
    \midrule
    \midrule
    \# nodes $|\mathcal{V}|$&  2,708 & 3,312 & 19,717 & 2,708 & 16,242 & 12,311& 1,718 & 88,860 & 282 &1,290 & 101 \\
    \# edges $|\mathcal{E}|$ & 1,579 & 1,079 & 7,963 & 1,072& 100 & 12,311 & 83,105 & 69,906 &315 &340& 43 \\
    \# classes $c$  & 7 & 6 & 3 & 7& 4& 40 &2 &11 & 2 & 2& 7  \\
    
    \# features $d$  & 1,433 & 3,703 & 500 & 1,433 & 100  &100 & 100 & 100 & 100 &100 &16 \\
    density ($\frac{|\mathcal{E}|}{|\mathcal{V}|}$) & 0.5835 & 0.3258 & 0.4041 & 0.3959 & 0.0062 & 1.0000 & 48.3946 & 0.7868 & 1.1160 & 0.2636 & 0.4257\\
    \bottomrule
  \end{tabular}
  }
\end{table}

\mypara{Baselines and hyperparameters.}
We compare \method with 8 representative hypergraph neural networks (HNN) models: HGNN \cite{Feng_You_Zhang_Ji_Gao_2019}, HNHN \cite{dong2020hnhn}, HCHA \cite{bai2021hypergraph}, UniGCN, UniGCNII \cite{huang2021unignnunifiedframeworkgraph}, AllDeepSets, AllSetTransformer \cite{chien2022allsetmultisetfunctionframework}, and ED-HNN \cite{wang2023equivarianthypergraphdiffusionneural}.
All baselines are implemented based on the official codebase of ED-HNN, which provides a unified framework for fair comparison.
All baseline are trained using the Adam optimizer with no weight decay, and we conduct a grid search over 72 hyperparameter configurations:
$\mathrm{lr} \in \{10^{-2}, 10^{-3}, 10^{-4}\}, \
\mathrm{epochs} \in \{50, 100, 150, 200\},\
\mathrm{num\_layers} \in \{1, 2\},\
\mathrm{hidden\_dim} \in \{64, 128, 256\}.$
In contrast, \method requires no training hyperparameters.
Instead, we search over 55 combinations of propagation coefficients \((\alpha_0, \alpha_1, \alpha_2)\) uniformly sampled from the 2-simplex, yielding a comparable hyperparameter space size to that of baselines. For each dataset split, we report the test accuracy corresponding to the best validation performance. All our experiments are conducted with NVIDIA RTX A6000 and AMD EPYC 9354.



\mypara{Evaluation.}
We evaluate the accuracy of all models on 10 random data splits per dataset. For each split, we allocate 5 labeled nodes per class for training, and another 5 nodes per class for validation \cite{kim2024hypeboy, wu2022hypergraph}, making 5-shot node classification. The remaining nodes are used for testing. We report the average classification accuracy and standard deviation across the ten splits \cite{kim2023datasets, li2024hypergraph, lee2024villain}.


\begin{table}[t]
  \caption{Classification accuracy (\%) for 5-shot node classification on real-world hypergraphs. We report the mean and standard deviation over 10 runs. Boldfaced letters indicate the best accuracy, and underlined letters indicate the second.
  \method achieves the highest average rank.}
  \label{tab05: small dataset}
  \centering
  \setlength{\defaultaddspace}{0.5ex}
  \resizebox{\textwidth}{!}{
  \begin{tabular}{l|cccccccccc|c}
    \toprule
    
     \textbf{Methods} & \textbf{Cora} & \textbf{Citeseer} & \textbf{Pubmed} & \textbf{Cora\_CA} & \textbf{20News} &
     \textbf{MN40}&\textbf{Congress}&\textbf{Wallmart} &\textbf{Senate}&\textbf{House}
     &\begin{tabular}[c]{@{}c@{}}\textbf{Avg.}\\\textbf{Rank}\end{tabular}  \\
    \midrule
    \midrule
    HGNN & $44.4_{\pm 8.9}$ & $40.1_{\pm 6.5}$ & $52.5_{\pm 9.1}$ & $54.3_{\pm 3.6}$ & $\mathbf{73.1_{\pm 2.3}}$ & $94.7_{\pm 0.3}$ & $86.7_{\pm 1.1}$ & $39.6_{\pm 2.4}$ & $56.8_{\pm 5.0}$ & $63.4_{\pm 4.3}$ & $5.9$ \\
    \addlinespace
    HNHN & $36.7_{\pm 5.8}$ & $36.0_{\pm 3.7}$ & $51.8_{\pm 3.7}$ & $39.2_{\pm 5.2}$ & $41.2_{\pm 5.7}$ & $90.8_{\pm 1.4}$ & $51.1_{\pm 2.7}$ & $15.9_{\pm 3.0}$ & $69.7_{\pm 11.6}$ & $67.4_{\pm 8.3}$ & $7.9$ \\
    \addlinespace
    HCHA & $44.4_{\pm 8.7}$ & $41.2_{\pm 6.5}$ & $52.9_{\pm 10.4}$ & $54.5_{\pm 4.2}$ & $\underline{72.9_{\pm 2.5}}$ & $94.7_{\pm 0.2}$ & $86.6_{\pm 1.3}$ & $39.3_{\pm 2.5}$ & $53.0_{\pm 5.0}$ & $63.5_{\pm 4.6}$ & $5.9$ \\
    \addlinespace
    UniGCN & $48.5_{\pm 8.3}$ & $41.6_{\pm 3.7}$ & $54.2_{\pm 10.3}$ & $55.3_{\pm 4.3}$ & $70.4_{\pm 2.8}$ & $95.9_{\pm 0.3}$ & $\mathbf{91.6_{\pm 0.4}}$ & $40.1_{\pm 2.8}$ & $61.4_{\pm 4.4}$ & $67.9_{\pm 5.1}$ & $3.9$ \\
    \addlinespace
    UniGCNII & $43.3_{\pm 9.9}$ & $38.9_{\pm 6.7}$ & $54.5_{\pm 8.4}$ & $52.0_{\pm 4.5}$ & $66.5_{\pm 4.6}$ & $\underline{96.4_{\pm 0.4}}$ & $83.5_{\pm 6.4}$ & $23.5_{\pm 2.4}$ & $\mathbf{70.4_{\pm 8.5}}$ & $\underline{70.7_{\pm 7.4}}$ & $5.5$ \\
    \addlinespace
    AllDeepSets & $48.6_{\pm 4.7}$ & $42.6_{\pm 4.4}$ & $53.2_{\pm 5.8}$ & $55.3_{\pm 5.1}$ & $51.4_{\pm 4.4}$ & $94.7_{\pm 0.3}$ & $69.5_{\pm 4.7}$ & $24.5_{\pm 3.7}$ & $65.3_{\pm 10.3}$ & $63.4_{\pm 8.3}$ & $5.7$ \\
    \addlinespace
    AllSetTransformer & $\underline{50.5_{\pm 4.4}}$ & $\underline{44.8_{\pm 2.7}}$ & $\underline{60.4_{\pm 4.5}}$ & $\underline{59.6_{\pm 3.4}}$ & $70.3_{\pm 1.5}$ & $95.5_{\pm 0.2}$ & $88.2_{\pm 1.1}$ & $38.3_{\pm 6.4}$ & $63.1_{\pm 10.7}$ & $66.3_{\pm 8.3}$ & $\underline{3.6}$ \\
    \addlinespace
    ED-HNN & $48.4_{\pm 6.4}$ & $44.5_{\pm 3.5}$ & $56.5_{\pm 6.6}$ & $58.8_{\pm 3.8}$ & $67.7_{\pm 3.7}$ & $96.0_{\pm 0.2}$ & $\underline{89.1_{\pm 4.0}}$ & $\underline{42.9_{\pm 5.7}}$ & $63.1_{\pm 9.1}$ & $62.8_{\pm 10.4}$ & $4.1$ \\
    \midrule
    \midrule 
    \textbf{\method (proposed)}& $\mathbf{51.9_{\pm 10.1}}$ & $\mathbf{49.1_{\pm 4.8}}$ & $\mathbf{62.6_{\pm 3.9}}$ & $\mathbf{60.0_{\pm 6.2}}$ & $68.6_{\pm 4.8}$ & $\mathbf{97.6_{\pm 0.3}}$ & $87.0_{\pm 4.8}$ & $\mathbf{43.9_{\pm 3.1}}$ & $\mathbf{70.4_{\pm 10.0}}$ & $\mathbf{73.2_{\pm 6.3}}$ & $\textbf{1.7}$\\  
    \bottomrule
  \end{tabular}
  }
\end{table}
\subsection{Classification accuracy}

Table \ref{tab05: small dataset} compares the accuracy of \method and the baseline HNNs on 10 hypergraphs.
\method demonstrates competitive or superior performance across all datasets, showing the highest average rank.
Despite its simple linear architecture, \method achieves high accuracy even on complex hypergraph structures, being competitive with complicated nonlinear methods. This validates the effectiveness of \method's architectural design in capturing high-order relationships, and highlights its strong generalization ability in few-shot node classification scenarios, where model robustness is crucial.

The results also highlight intriguing trends among baseline models.
In particular, early models such as HGNN and UniGCN remain competitive, particularly on datasets such as 20News and Congress. Their relatively simple architectures may contribute to stronger generalization in few-shot settings, where overfitting is a common challenge.
In contrast, more complex methods such as HNHN and AllDeepSets tend to underperform, likely due to higher complexity and reduced robustness under limited supervision. These observations further underscore the strength of the simple yet effective design of \method, producing consistently high performance in diverse hypergraph structures.


\subsection{Running time}

Table \ref{tab05: time ratio small dataset} shows the running time of \method and the baseline models, including both training and inference time.
All existing HNNs exhibit significantly higher computational costs over \method.
The speedup stands out for large complicated HNNs, such as AllDeepSets and ED-HNN, as \method is over 1700$\times$ faster than AllDeepSets in the House dataset.
Even for relatively simple models such as HGNN, HCHA, UniGCN, and UniGCNII, \method shows a consistent improvement ranging from 2.5$\times$ even to 300$\times$.
In summary, \method demonstrates overwhelming speed superiority across all datasets while maintaining competitive accuracy, establishing itself as a highly efficient solution.

The efficiency of \method comes from its lightweight architecture, where the computational cost is linear in the number of nodes, feature dimension and number of classes. This design leads to substantial runtime advantages on datasets with compact input dimensions. In particular, Congress, Senate, and House have feature dimensions that are at least \(5\times\) and up to \(37\times\) smaller, node counts up to \(9\times\) fewer, and class counts between \(2\times\) and \(20\times\) fewer than other datasets. These characteristics make them ideal for highlighting the scalability of \method, which achieves remarkable speedups of up to \(292\times\) on House and \(696\times\) on Senate while maintaining competitive accuracy.

\begin{table}[t]
  \caption{The running time of \method and the baseline models, including both training and inference. Each time is represented as a ratio over the running time of \method. Therefore, the lower is the better.
  \method consistently shows the fastest runtime, up to $696 \times$ faster than the best competitor.}
  \label{tab05: time ratio small dataset}
  \centering
  \setlength{\defaultaddspace}{0.5ex}
  \resizebox{\textwidth}{!}{
  \begin{tabular}{l|rrrrrrrrrr}
    \toprule
    
     \textbf{Methods} & \textbf{Cora} & \textbf{Citeseer} & \textbf{Pubmed} & \textbf{Cora\_CA} & \textbf{20News} &
     \textbf{MN40}&\textbf{Congress}&\textbf{Wallmart} &\textbf{Senate}&\textbf{House}
        \\
    \midrule
    \midrule
    HGNN & $8.65$ & $3.81$ & $3.00$ & $8.95$ & $\underline{10.54}$ & $20.05$ & $53.10$ & $30.16$ & $777.14$ & $388.05$\\
    \addlinespace
    HNHN & $\underline{7.61}$ & $\underline{2.71}$ & $3.56$ & $\underline{7.23}$ & $15.51$ & $\underline{13.78}$ & $\underline{42.11}$ & $21.47$ & $696.85$ & $345.93$\\
    \addlinespace
    HCHA & $12.37$ & $4.74$ & $5.51$ & $10.19$ & $12.21$ & $20.63$ & $71.62$ & $\underline{12.73}$ & $1008.85$ & $699.41$\\
    \addlinespace
    UniGCN & $17.12$ & $2.63$ & $8.29$ & $8.97$ & $28.79$ & $21.86$ & $91.16$ & $32.90$ & $716.43$ & $\underline{292.95}$\\
    \addlinespace
    UniGCNII & $15.77$ & $5.18$ & $\underline{2.56}$ & $17.25$ & $16.98$ & $19.41$ & $80.17$ & $36.37$ & $\underline{696.63}$ & $369.10$\\
    \addlinespace
    AllDeepSets & $58.11$ & $24.92$ & $11.01$ & $57.56$ & $35.52$ & $47.51$ & $273.58$ & $90.16$ & $4048.55$ & $1748.67$\\
    \addlinespace
    AllSetTransformer & $9.76$ & $4.30$ & $10.98$ & $12.37$ & $60.21$ & $24.94$ & $65.59$ & $76.89$ & $997.43$ & $524.15$\\
    \addlinespace
    ED-HNN & $16.04$ & $6.06$ & $5.46$ & $23.55$ & $46.71$ & $28.70$ & $426.91$ & $46.31$ & $714.73$ & $379.41$\\
    \midrule
    \midrule
    \method \textbf{(proposed)}& $\mathbf{1.00}$ & $\mathbf{1.00}$ & $\mathbf{1.00}$ & $\mathbf{1.00}$ & $\mathbf{1.00}$ & $\mathbf{1.00}$ & $\mathbf{1.00}$ & $\mathbf{1.00}$ & $\mathbf{1.00}$ & $\mathbf{1.00}$\\  
    \bottomrule
  \end{tabular}
  }
\end{table}


\subsection{Interpretability}
\label{ssec:interpretability}

\method is inherently interpretable, thanks to its linear decision process that directly maps the feature space to the prediction.
There are two ways to interpret the learned knowledge of \method.
First, the $(i, k)$-th element of the weight matrix $\mathbf{W}^*$ can be understood as the importance of the $i$-th feature for predicting the $k$-th class.
Second, each column of $\mathbf{W}^*$ can be understood as the embedding of class $k$ lying on the feature space.
In this way, the relationships between class embeddings and node features provide a deeper insight on the nature of the given dataset, along with the graphical structure.

To verify the interpretability of \method, we conduct a case study on the Zoo dataset, whose node features have clear semantic meanings: each feature attribute represents a characteristic of an animal, e.g., hair or milk, while the target class is its species.
The nodes represent animals, and the hyperedges group together animals that share a common feature, e.g., all animals having the same \emph{hair}.

Table~\ref{tab07: interpretability} visualizes the weight matrix $\W^*$, where each value is color-coded: darker red indicates a higher relative value compared to other classes for that feature.
Since the initial input features are all nonnegative, the resulting weight elements also remain nonnegative.
For instance, the Mammal class shows significantly higher values in Hair, Milk, and Capsize, suggesting that these features play a key role in distinguishing mammals from other animal groups. Similarly, the Bird class exhibits prominent values in Feathers, Eggs, Airborne, etc., reflecting biologically distinctive traits of birds.
These results demonstrate that \method not only achieves high predictive performance but also yields representations that align with domain knowledge in a transparent and interpretable manner.

\begin{table}[t]
  \caption{Relative feature importance values learned by \method on the Zoo dataset under a 3-shot setting. Darker red cells indicate higher values relative to other classes, with cells having darkness of 80\% or higher highlighted by black boxes.
  Refer to Section \ref{ssec:interpretability} for detailed information.}  
  \label{tab07: interpretability}
  \centering
  \setlength{\fboxrule}{1pt}
  \setlength{\defaultaddspace}{0.3ex}
  \resizebox{0.8\textwidth}{!}{
  \begin{tabular}{cccccccc}
    \toprule
    
      & \textbf{Mammal} & \textbf{Bird} & \textbf{Reptile} & \textbf{Fish} & \textbf{Amphibian} &
     \textbf{Bug}&\textbf{Invertebrate}\\
    \midrule
    Hair & \cellcolor{red!90}\fbox{$0.1735$} & \cellcolor{red!40}$0.0719$ & \cellcolor{red!40}$0.0778$ & \cellcolor{red!50}$0.0819$ & \cellcolor{red!40}$0.0661$ & \cellcolor{red!50}$0.0800$ & \cellcolor{red!30}$0.0465$ \\
    \addlinespace
    Feathers & \cellcolor{red!20}$0.0251$ & \cellcolor{red!90}\fbox{$0.1726$} & \cellcolor{red!40}$0.0383$ & \cellcolor{red!30}$0.0414$ & \cellcolor{red!20}$0.0311$ & \cellcolor{red!20}$0.0305$ & \cellcolor{red!20}$0.0276$ \\
    \addlinespace
    Eggs & \cellcolor{red!10}$0.1072$ & \cellcolor{red!80}\fbox{$0.2440$} & \cellcolor{red!70}$0.2255$ & \cellcolor{red!100}\fbox{$0.2931$} & \cellcolor{red!50}$0.1979$ & \cellcolor{red!40}$0.1792$ & \cellcolor{red!50}$0.1850$ \\
    \addlinespace
    Milk & \cellcolor{red!90}\fbox{$0.1720$} & \cellcolor{red!40}$0.0664$ & \cellcolor{red!40}$0.0740$ & \cellcolor{red!50}$0.0826$ & \cellcolor{red!40}$0.0626$ & \cellcolor{red!30}$0.0432$ & \cellcolor{red!30}$0.0409$ \\
    \addlinespace
    Airborne & \cellcolor{red!20}$0.0310$ & \cellcolor{red!80}\fbox{$0.1417$} & \cellcolor{red!30}$0.0450$ & \cellcolor{red!30}$0.0456$ & \cellcolor{red!20}$0.0376$ & \cellcolor{red!50}$0.0979$ & \cellcolor{red!20}$0.0361$ \\
    \addlinespace
    Aquatic & \cellcolor{red!20}$0.0807$ & \cellcolor{red!10}$0.0688$ & \cellcolor{red!20}$0.0727$ & \cellcolor{red!100}\fbox{$0.2522$} & \cellcolor{red!60}$0.1606$ & \cellcolor{red!10}$0.0492$ & \cellcolor{red!50}$0.1437$ \\
    \addlinespace
    Predetor & \cellcolor{red!40}$0.1796$ & \cellcolor{red!10}$0.1054$ & \cellcolor{red!50}$0.1878$ & \cellcolor{red!100}\fbox{$0.2841$} & \cellcolor{red!40}$0.1601$ & \cellcolor{red!10}$0.1006$ & \cellcolor{red!40}$0.1688$ \\
    \addlinespace
    Toothed & \cellcolor{red!60}$0.1707$ & \cellcolor{red!30}$0.1001$ & \cellcolor{red!70}$0.1946$ & \cellcolor{red!100}\fbox{$0.2921$} & \cellcolor{red!70}$0.1959$ & \cellcolor{red!10}$0.0682$ & \cellcolor{red!10}$0.0745$ \\
    \addlinespace
    Backbone & \cellcolor{red!70}$0.2280$ & \cellcolor{red!90}\fbox{$0.2768$} & \cellcolor{red!90}\fbox{$0.2687$} & \cellcolor{red!100}\fbox{$0.3380$} & \cellcolor{red!70}$0.2304$ & \cellcolor{red!10}$0.1012$ & \cellcolor{red!10}$0.1049$ \\
    \addlinespace
    Breathes & \cellcolor{red!70}$0.2239$ & \cellcolor{red!90}\fbox{$0.2769$} & \cellcolor{red!80}\fbox{$0.2597$} & \cellcolor{red!40}$0.1652$ & \cellcolor{red!70}$0.2235$ & \cellcolor{red!50}$0.1939$ & \cellcolor{red!10}$0.1064$ \\
    \addlinespace
    Venomous & \cellcolor{red!10}$0.0109$ & \cellcolor{red!10}$0.0141$ & \cellcolor{red!40}$0.0628$ & \cellcolor{red!20}$0.0217$ & \cellcolor{red!30}$0.0465$ & \cellcolor{red!30}$0.0415$ & \cellcolor{red!20}$0.171$ \\
    \addlinespace
    Fins & \cellcolor{red!20}$0.0246$ & \cellcolor{red!20}$0.0272$ & \cellcolor{red!20}$0.0337$ & \cellcolor{red!100}\fbox{$0.2027$} & \cellcolor{red!20}$0.0301$ & \cellcolor{red!10}$0.0183$ & \cellcolor{red!20}$0.0273$ \\
    \addlinespace
    Legs & \cellcolor{red!60}$0.8380$ & \cellcolor{red!50}$0.7794$ & \cellcolor{red!50}$0.7956$ & \cellcolor{red!10}$0.5875$ & \cellcolor{red!80}\fbox{$0.8520$} & \cellcolor{red!90}\fbox{$0.9333$} & \cellcolor{red!90}\fbox{$0.9310$} \\
    \addlinespace
    Tail & \cellcolor{red!50}$0.1836$ & \cellcolor{red!90}\fbox{$0.2642$} & \cellcolor{red!80}\fbox{$0.2542$} & \cellcolor{red!100}\fbox{$0.3204$} & \cellcolor{red!30}$0.1522$ & \cellcolor{red!10}$0.0913$ & \cellcolor{red!10}$0.0963$ \\
    \addlinespace
    Domestic & \cellcolor{red!20}$0.0225$ & \cellcolor{red!20}$0.0266$ & \cellcolor{red!20}$0.0234$ & \cellcolor{red!20}$0.0253$ & \cellcolor{red!10}$0.0197$ & \cellcolor{red!10}$0.0179$ & \cellcolor{red!10}$0.0146$ \\
    \addlinespace
    Capsize & \cellcolor{red!80}\fbox{$0.1416$} & \cellcolor{red!80}\fbox{$0.1515$} & \cellcolor{red!60}$0.1134$ & \cellcolor{red!100}\fbox{$0.1887$} & \cellcolor{red!40}$0.0687$ & \cellcolor{red!30}$0.0487$ & \cellcolor{red!40}$0.0744$ \\
    
    \bottomrule
  \end{tabular}
  }
\end{table}

\subsection{Ablation study}

In Table~\ref{tab07: ablation}, we conduct an ablation study to assess the individual and combined impact of \emph{tractable closed-form solution} (TCS) and \emph{redundancy-aware propagation} (RAP), two core modules of \method, in the same setting as in Table \ref{tab05: small dataset}.
The results show that the removal of both components leads to notable performance drops, while isolating TCS or RAP yields moderate gains, with TCS generally offering more stable improvements.
Our full model \method consistently achieves the best results, outperforming all ablated variants with the highest average rank.
These findings highlight the complementary roles of TCS and RAP—TCS provides stability and tractability through a closed-form solution, while RAP enhances representation by mitigating redundancy.

We observe that removing RAP or TCS can improve performance on certain datasets (Congress and Senate for RAP, 20News and MN40 for TCS).
RAP tends to degrade performance on high-density datasets such as Congress and Senate, likely because the degree normalization reduces each node’s relative contribution, limiting the effect of self-information. For TCS, which assumes informative node embeddings, the impact of removal is more significant on 20News than MN40. The extremely low density of 20News may hinder \method, which relies solely on propagation to refine embeddings, from producing sufficiently informative representations. In contrast, MN40’s large number of classes increases the relative Frobenius norm error of TCS, suggesting that the approximation becomes less accurate as the number of classes grows, partially explaining the observed results.

\begin{table}[t]
  \caption{Ablation study of \method with four baselines: three variants with a selective removal of the \method components,
  and the linearized HGNN that lacks multi-hop message combination. Our full model  consistently outperforms all ablations, demonstrating the effectiveness of both components.}

  \label{tab07: ablation}
  \centering
  \setlength{\defaultaddspace}{0.5ex}
  \resizebox{\textwidth}{!}{
  \begin{tabular}{l|cccccccccc|c}
    \toprule
     \textbf{Methods} & \textbf{Cora} & \textbf{Citeseer} & \textbf{Pubmed} & \textbf{Cora\_CA} & \textbf{20News} &
     \textbf{MN40} & \textbf{Congress} & \textbf{Wallmart} & \textbf{Senate} & \textbf{House} &\begin{tabular}[c]{@{}c@{}}\textbf{Avg}\\\textbf{Rank}\end{tabular} \\
    \midrule
    \midrule
    Linearized HGNN & $42.7_{\pm 8.4}$ & $34.3_{\pm 9.4}$ & $51.7_{\pm 6.6}$ & $49.6_{\pm 6.9}$ & $68.0_{\pm 6.0}$ & $94.5_{\pm 0.5}$ & $83.7_{\pm 5.0}$ & $29.0_{\pm 6.4}$ & $55.8_{\pm 5.1}$ & $57.8_{\pm 6.5}$ & 4.5\\
    \addlinespace
    No TCS, No RAP & $44.3_{\pm 7.2}$ & $35.0_{\pm 7.4}$ & $52.8_{\pm 6.0}$ & $48.8_{\pm 6.8}$ & $64.6_{\pm 7.5}$ & $97.6_{\pm 0.5}$ & $\mathbf{88.5_{\pm 2.3}}$ & $22.3_{\pm 6.6}$ & $\underline{71.8_{\pm 5.1}}$ & $69.0_{\pm 4.6}$ & 3.7 \\
    \addlinespace
    No TCS & $46.9_{\pm 5.5}$ & $37.8_{\pm 6.9}$ & $\underline{53.3_{\pm 6.0}}$ & $\underline{52.5_{\pm 5.2}}$ & $\mathbf{69.8_{\pm 7.5}}$ & $\mathbf{97.8 _{\pm 0.2}}$ & $87.0_{\pm 2.5}$ & $26.6_{\pm 5.0}$ & $67.2_{\pm 10.1}$ & $\underline{71.6_{\pm 5.9}}$ & 2.7\\
    \addlinespace
    No RAP & $\underline{50.6_{\pm 8.8}}$ & $\underline{48.6_{\pm 4.4}}$ & $\mathbf{62.6_{\pm 4.2}}$ & $\mathbf{60.0_{\pm 5.8}}$ & $64.7_{\pm 4.9}$ & $\underline{97.7_{\pm 0.3}}$ & $\underline{88.4_{\pm 2.5}}$ & $\underline{40.6_{\pm 4.9}}$ & $\mathbf{73.8_{\pm 5.2}}$ & $71.4_{\pm 3.9}$ &\underline{2.0}\\
    \midrule
    \midrule
    \textbf{\method (proposed})& $\mathbf{51.9_{\pm 10.1}}$ & $\mathbf{49.1_{\pm 4.8}}$ & $\mathbf{62.6_{\pm 3.9}}$ & $\mathbf{60.0_{\pm 6.2}}$ & $\underline{68.6_{\pm 4.8}}$ & $97.6_{\pm 0.3}$ & $87.0_{\pm 4.8}$ & $\mathbf{43.9_{\pm 3.1}}$ & $70.4_{\pm 10.0}$ & $\mathbf{73.2_{\pm 6.3}}$ &\textbf{1.7}\\
    \bottomrule
  \end{tabular}
  }
\end{table}




\section{Conclusion}
\label{sec:conclusion}

In this work, we propose \method (\methodlong), a parameter-free model for few-shot node classification on hypergraphs. By reformulating existing HNNs into a unified linear framework with a tractable closed-form weight solution and redundancy-aware propagation, \method achieves strong generalization, fast inference, and interpretable representations without iterative training. Extensive experiments demonstrate its superior accuracy and scalability. One limitation of our work is that \method is specifically designed for node classification and is not tailored to other hypergraph tasks such as hyperedge prediction, local clustering, or hyperedge disambiguation. In future work, we plan to extend our framework to support a broader range of hypergraph learning tasks by designing a more general-purpose and efficient linear HNN architecture.

\section*{Acknowledgements}

This work was supported by the National Research Foundation of Korea (NRF) grant funded by the Korea government (MSIT) (RS-2024-00341425 and RS-2024-00406985).
This work was supported by BK21 FOUR (Connected AI Education \& Research Program for Industry and Society Innovation, KAIST EE, No. 4120200113769).
Jaemin Yoo is the corresponding author.

{\small
\bibliographystyle{plainnat}
\bibliography{reference}}

\newpage
\section*{NeurIPS Paper Checklist}

\begin{enumerate}

\item {\bf Claims}
    \item[] Question: Do the main claims made in the abstract and introduction accurately reflect the paper's contributions and scope?
    \item[] Answer: \answerYes{} 
    \item[] Justification: To the best of our knowledge, our work does so.
    \item[] Guidelines:
    \begin{itemize}
        \item The answer NA means that the abstract and introduction do not include the claims made in the paper.
        \item The abstract and/or introduction should clearly state the claims made, including the contributions made in the paper and important assumptions and limitations. A No or NA answer to this question will not be perceived well by the reviewers. 
        \item The claims made should match theoretical and experimental results, and reflect how much the results can be expected to generalize to other settings. 
        \item It is fine to include aspirational goals as motivation as long as it is clear that these goals are not attained by the paper. 
    \end{itemize}

\item {\bf Limitations}
    \item[] Question: Does the paper discuss the limitations of the work performed by the authors?
    \item[] Answer: \answerYes{} 
    \item[] Justification: To the best of our knowledge, our work does so.
    \item[] Guidelines:
    \begin{itemize}
        \item The answer NA means that the paper has no limitation while the answer No means that the paper has limitations, but those are not discussed in the paper. 
        \item The authors are encouraged to create a separate "Limitations" section in their paper.
        \item The paper should point out any strong assumptions and how robust the results are to violations of these assumptions (e.g., independence assumptions, noiseless settings, model well-specification, asymptotic approximations only holding locally). The authors should reflect on how these assumptions might be violated in practice and what the implications would be.
        \item The authors should reflect on the scope of the claims made, e.g., if the approach was only tested on a few datasets or with a few runs. In general, empirical results often depend on implicit assumptions, which should be articulated.
        \item The authors should reflect on the factors that influence the performance of the approach. For example, a facial recognition algorithm may perform poorly when image resolution is low or images are taken in low lighting. Or a speech-to-text system might not be used reliably to provide closed captions for online lectures because it fails to handle technical jargon.
        \item The authors should discuss the computational efficiency of the proposed algorithms and how they scale with dataset size.
        \item If applicable, the authors should discuss possible limitations of their approach to address problems of privacy and fairness.
        \item While the authors might fear that complete honesty about limitations might be used by reviewers as grounds for rejection, a worse outcome might be that reviewers discover limitations that aren't acknowledged in the paper. The authors should use their best judgment and recognize that individual actions in favor of transparency play an important role in developing norms that preserve the integrity of the community. Reviewers will be specifically instructed to not penalize honesty concerning limitations.
    \end{itemize}

\item {\bf Theory assumptions and proofs}
    \item[] Question: For each theoretical result, does the paper provide the full set of assumptions and a complete (and correct) proof?
    \item[] Answer: \answerYes{} 
    \item[] Justification: To the best of our knowledge, our work does so.
    \item[] Guidelines:
    \begin{itemize}
        \item The answer NA means that the paper does not include theoretical results. 
        \item All the theorems, formulas, and proofs in the paper should be numbered and cross-referenced.
        \item All assumptions should be clearly stated or referenced in the statement of any theorems.
        \item The proofs can either appear in the main paper or the supplemental material, but if they appear in the supplemental material, the authors are encouraged to provide a short proof sketch to provide intuition. 
        \item Inversely, any informal proof provided in the core of the paper should be complemented by formal proofs provided in appendix or supplemental material.
        \item Theorems and Lemmas that the proof relies upon should be properly referenced. 
    \end{itemize}

    \item {\bf Experimental result reproducibility}
    \item[] Question: Does the paper fully disclose all the information needed to reproduce the main experimental results of the paper to the extent that it affects the main claims and/or conclusions of the paper (regardless of whether the code and data are provided or not)?
    \item[] Answer: \answerYes{} 
    \item[] Justification: To the best of our knowledge, our work does so.
    \item[] Guidelines:
    \begin{itemize}
        \item The answer NA means that the paper does not include experiments.
        \item If the paper includes experiments, a No answer to this question will not be perceived well by the reviewers: Making the paper reproducible is important, regardless of whether the code and data are provided or not.
        \item If the contribution is a dataset and/or model, the authors should describe the steps taken to make their results reproducible or verifiable. 
        \item Depending on the contribution, reproducibility can be accomplished in various ways. For example, if the contribution is a novel architecture, describing the architecture fully might suffice, or if the contribution is a specific model and empirical evaluation, it may be necessary to either make it possible for others to replicate the model with the same dataset, or provide access to the model. In general. releasing code and data is often one good way to accomplish this, but reproducibility can also be provided via detailed instructions for how to replicate the results, access to a hosted model (e.g., in the case of a large language model), releasing of a model checkpoint, or other means that are appropriate to the research performed.
        \item While NeurIPS does not require releasing code, the conference does require all submissions to provide some reasonable avenue for reproducibility, which may depend on the nature of the contribution. For example
        \begin{enumerate}
            \item If the contribution is primarily a new algorithm, the paper should make it clear how to reproduce that algorithm.
            \item If the contribution is primarily a new model architecture, the paper should describe the architecture clearly and fully.
            \item If the contribution is a new model (e.g., a large language model), then there should either be a way to access this model for reproducing the results or a way to reproduce the model (e.g., with an open-source dataset or instructions for how to construct the dataset).
            \item We recognize that reproducibility may be tricky in some cases, in which case authors are welcome to describe the particular way they provide for reproducibility. In the case of closed-source models, it may be that access to the model is limited in some way (e.g., to registered users), but it should be possible for other researchers to have some path to reproducing or verifying the results.
        \end{enumerate}
    \end{itemize}

\item {\bf Open access to data and code}
    \item[] Question: Does the paper provide open access to the data and code, with sufficient instructions to faithfully reproduce the main experimental results, as described in supplemental material?
    \item[] Answer: \answerYes{} 
    \item[] Justification: To the best of our knowledge, our work does so.
    \item[] Guidelines:
    \begin{itemize}
        \item The answer NA means that paper does not include experiments requiring code.
        \item Please see the NeurIPS code and data submission guidelines (\url{https://nips.cc/public/guides/CodeSubmissionPolicy}) for more details.
        \item While we encourage the release of code and data, we understand that this might not be possible, so “No” is an acceptable answer. Papers cannot be rejected simply for not including code, unless this is central to the contribution (e.g., for a new open-source benchmark).
        \item The instructions should contain the exact command and environment needed to run to reproduce the results. See the NeurIPS code and data submission guidelines (\url{https://nips.cc/public/guides/CodeSubmissionPolicy}) for more details.
        \item The authors should provide instructions on data access and preparation, including how to access the raw data, preprocessed data, intermediate data, and generated data, etc.
        \item The authors should provide scripts to reproduce all experimental results for the new proposed method and baselines. If only a subset of experiments are reproducible, they should state which ones are omitted from the script and why.
        \item At submission time, to preserve anonymity, the authors should release anonymized versions (if applicable).
        \item Providing as much information as possible in supplemental material (appended to the paper) is recommended, but including URLs to data and code is permitted.
    \end{itemize}

\item {\bf Experimental setting/details}
    \item[] Question: Does the paper specify all the training and test details (e.g., data splits, hyperparameters, how they were chosen, type of optimizer, etc.) necessary to understand the results?
    \item[] Answer: \answerYes{} 
    \item[] Justification: To the best of our knowledge, our work does so.
    \item[] Guidelines:
    \begin{itemize}
        \item The answer NA means that the paper does not include experiments.
        \item The experimental setting should be presented in the core of the paper to a level of detail that is necessary to appreciate the results and make sense of them.
        \item The full details can be provided either with the code, in appendix, or as supplemental material.
    \end{itemize}

\item {\bf Experiment statistical significance}
    \item[] Question: Does the paper report error bars suitably and correctly defined or other appropriate information about the statistical significance of the experiments?
    \item[] Answer: \answerYes{} 
    \item[] Justification: To the best of our knowledge, our work does so.
    \item[] Guidelines:
    \begin{itemize}
        \item The answer NA means that the paper does not include experiments.
        \item The authors should answer "Yes" if the results are accompanied by error bars, confidence intervals, or statistical significance tests, at least for the experiments that support the main claims of the paper.
        \item The factors of variability that the error bars are capturing should be clearly stated (for example, train/test split, initialization, random drawing of some parameter, or overall run with given experimental conditions).
        \item The method for calculating the error bars should be explained (closed form formula, call to a library function, bootstrap, etc.)
        \item The assumptions made should be given (e.g., Normally distributed errors).
        \item It should be clear whether the error bar is the standard deviation or the standard error of the mean.
        \item It is OK to report 1-sigma error bars, but one should state it. The authors should preferably report a 2-sigma error bar than state that they have a 96\% CI, if the hypothesis of Normality of errors is not verified.
        \item For asymmetric distributions, the authors should be careful not to show in tables or figures symmetric error bars that would yield results that are out of range (e.g. negative error rates).
        \item If error bars are reported in tables or plots, The authors should explain in the text how they were calculated and reference the corresponding figures or tables in the text.
    \end{itemize}

\item {\bf Experiments compute resources}
    \item[] Question: For each experiment, does the paper provide sufficient information on the computer resources (type of compute workers, memory, time of execution) needed to reproduce the experiments?
    \item[] Answer: \answerYes{} 
    \item[] Justification: To the best of our knowledge, our work does so.
    \item[] Guidelines:
    \begin{itemize}
        \item The answer NA means that the paper does not include experiments.
        \item The paper should indicate the type of compute workers CPU or GPU, internal cluster, or cloud provider, including relevant memory and storage.
        \item The paper should provide the amount of compute required for each of the individual experimental runs as well as estimate the total compute. 
        \item The paper should disclose whether the full research project required more compute than the experiments reported in the paper (e.g., preliminary or failed experiments that didn't make it into the paper). 
    \end{itemize}
    
\item {\bf Code of ethics}
    \item[] Question: Does the research conducted in the paper conform, in every respect, with the NeurIPS Code of Ethics \url{https://neurips.cc/public/EthicsGuidelines}?
    \item[] Answer: \answerYes{} 
    \item[] Justification: To the best of our knowledge, our work does so.
    \item[] Guidelines:
    \begin{itemize}
        \item The answer NA means that the authors have not reviewed the NeurIPS Code of Ethics.
        \item If the authors answer No, they should explain the special circumstances that require a deviation from the Code of Ethics.
        \item The authors should make sure to preserve anonymity (e.g., if there is a special consideration due to laws or regulations in their jurisdiction).
    \end{itemize}

\item {\bf Broader impacts}
    \item[] Question: Does the paper discuss both potential positive societal impacts and negative societal impacts of the work performed?
    \item[] Answer: \answerYes{} 
    \item[] Justification: To the best of our knowledge, our work does so.
    \item[] Guidelines:
    \begin{itemize}
        \item The answer NA means that there is no societal impact of the work performed.
        \item If the authors answer NA or No, they should explain why their work has no societal impact or why the paper does not address societal impact.
        \item Examples of negative societal impacts include potential malicious or unintended uses (e.g., disinformation, generating fake profiles, surveillance), fairness considerations (e.g., deployment of technologies that could make decisions that unfairly impact specific groups), privacy considerations, and security considerations.
        \item The conference expects that many papers will be foundational research and not tied to particular applications, let alone deployments. However, if there is a direct path to any negative applications, the authors should point it out. For example, it is legitimate to point out that an improvement in the quality of generative models could be used to generate deepfakes for disinformation. On the other hand, it is not needed to point out that a generic algorithm for optimizing neural networks could enable people to train models that generate Deepfakes faster.
        \item The authors should consider possible harms that could arise when the technology is being used as intended and functioning correctly, harms that could arise when the technology is being used as intended but gives incorrect results, and harms following from (intentional or unintentional) misuse of the technology.
        \item If there are negative societal impacts, the authors could also discuss possible mitigation strategies (e.g., gated release of models, providing defenses in addition to attacks, mechanisms for monitoring misuse, mechanisms to monitor how a system learns from feedback over time, improving the efficiency and accessibility of ML).
    \end{itemize}
    
\item {\bf Safeguards}
    \item[] Question: Does the paper describe safeguards that have been put in place for responsible release of data or models that have a high risk for misuse (e.g., pretrained language models, image generators, or scraped datasets)?
    \item[] Answer: \answerNA{} 
    \item[] Justification: Our work poses no such risks.
    \item[] Guidelines:
    \begin{itemize}
        \item The answer NA means that the paper poses no such risks.
        \item Released models that have a high risk for misuse or dual-use should be released with necessary safeguards to allow for controlled use of the model, for example by requiring that users adhere to usage guidelines or restrictions to access the model or implementing safety filters. 
        \item Datasets that have been scraped from the Internet could pose safety risks. The authors should describe how they avoided releasing unsafe images.
        \item We recognize that providing effective safeguards is challenging, and many papers do not require this, but we encourage authors to take this into account and make a best faith effort.
    \end{itemize}

\item {\bf Licenses for existing assets}
    \item[] Question: Are the creators or original owners of assets (e.g., code, data, models), used in the paper, properly credited and are the license and terms of use explicitly mentioned and properly respected?
    \item[] Answer: \answerYes{} 
    \item[] Justification: To the best of our knowledge, our work does so.
    \item[] Guidelines:
    \begin{itemize}
        \item The answer NA means that the paper does not use existing assets.
        \item The authors should cite the original paper that produced the code package or dataset.
        \item The authors should state which version of the asset is used and, if possible, include a URL.
        \item The name of the license (e.g., CC-BY 4.0) should be included for each asset.
        \item For scraped data from a particular source (e.g., website), the copyright and terms of service of that source should be provided.
        \item If assets are released, the license, copyright information, and terms of use in the package should be provided. For popular datasets, \url{paperswithcode.com/datasets} has curated licenses for some datasets. Their licensing guide can help determine the license of a dataset.
        \item For existing datasets that are re-packaged, both the original license and the license of the derived asset (if it has changed) should be provided.
        \item If this information is not available online, the authors are encouraged to reach out to the asset's creators.
    \end{itemize}

\item {\bf New assets}
    \item[] Question: Are new assets introduced in the paper well documented and is the documentation provided alongside the assets?
    \item[] Answer: \answerYes{} 
    \item[] Justification: To the best of our knowledge, our work does so.
    \item[] Guidelines:
    \begin{itemize}
        \item The answer NA means that the paper does not release new assets.
        \item Researchers should communicate the details of the dataset/code/model as part of their submissions via structured templates. This includes details about training, license, limitations, etc. 
        \item The paper should discuss whether and how consent was obtained from people whose asset is used.
        \item At submission time, remember to anonymize your assets (if applicable). You can either create an anonymized URL or include an anonymized zip file.
    \end{itemize}

\item {\bf Crowdsourcing and research with human subjects}
    \item[] Question: For crowdsourcing experiments and research with human subjects, does the paper include the full text of instructions given to participants and screenshots, if applicable, as well as details about compensation (if any)? 
    \item[] Answer: \answerNA{} 
    \item[] Justification: The paper does not involve crowdsourcing nor research with human subjects.
    \item[] Guidelines:
    \begin{itemize}
        \item The answer NA means that the paper does not involve crowdsourcing nor research with human subjects.
        \item Including this information in the supplemental material is fine, but if the main contribution of the paper involves human subjects, then as much detail as possible should be included in the main paper. 
        \item According to the NeurIPS Code of Ethics, workers involved in data collection, curation, or other labor should be paid at least the minimum wage in the country of the data collector. 
    \end{itemize}

\item {\bf Institutional review board (IRB) approvals or equivalent for research with human subjects}
    \item[] Question: Does the paper describe potential risks incurred by study participants, whether such risks were disclosed to the subjects, and whether Institutional Review Board (IRB) approvals (or an equivalent approval/review based on the requirements of your country or institution) were obtained?
    \item[] Answer: \answerNA{} 
    \item[] Justification: The paper does not involve crowdsourcing nor research with human subjects.
    \item[] Guidelines:
    \begin{itemize}
        \item The answer NA means that the paper does not involve crowdsourcing nor research with human subjects.
        \item Depending on the country in which research is conducted, IRB approval (or equivalent) may be required for any human subjects research. If you obtained IRB approval, you should clearly state this in the paper. 
        \item We recognize that the procedures for this may vary significantly between institutions and locations, and we expect authors to adhere to the NeurIPS Code of Ethics and the guidelines for their institution. 
        \item For initial submissions, do not include any information that would break anonymity (if applicable), such as the institution conducting the review.
    \end{itemize}

\item {\bf Declaration of LLM usage}
    \item[] Question: Does the paper describe the usage of LLMs if it is an important, original, or non-standard component of the core methods in this research? Note that if the LLM is used only for writing, editing, or formatting purposes and does not impact the core methodology, scientific rigorousness, or originality of the research, declaration is not required.
    \item[] Answer: \answerNA{} 
    \item[] Justification: The core method development in this research does not involve LLMs.
    \item[] Guidelines:
    \begin{itemize}
        \item The answer NA means that the core method development in this research does not involve LLMs as any important, original, or non-standard components.
        \item Please refer to our LLM policy (\url{https://neurips.cc/Conferences/2025/LLM}) for what should or should not be described.
    \end{itemize}

\end{enumerate}
\newpage
\newpage
\appendix

\section{Proofs}
\subsection{Proof of lemma 1}
\label{appendix:proof-lemma-1}

\begin{equation}
    \frac{\partial\mathcal{L_\mathrm{SSE}}}{\partial\W}
    = -2(\Dtrn\P\X)^\top(\Dtrn\Y-\Dtrn\Yhat)
\end{equation}
where $\Yhat = \P\X\W$. The optimal $\W_*$ minimizes $\mathcal{L_\mathrm{SSE}}$:
\begin{align}
    & -2(\Dtrn\P\X)^\top(\Dtrn\Y-\Dtrn\P\X\W_*) = 0 \\
    & (\Dtrn\P\X)^\top\Dtrn\P\X\W_*=(\Dtrn\P\X)^\top\Dtrn\Y \\
    & \left((\P\X)^\top\Dtrn(\P\X)\right)\W_*=(\P\X)^\top\Dtrn\Y \\
    & \W_*=\left((\P\X)^\top\Dtrn(\P\X)\right)^\dagger(\P\X)^\top\Dtrn\Y
\end{align}

\subsection{Proof of theorem 1}
\label{appendix:proof-theorem-1}

Without loss of generality, we can set the first $k \cdot c$ diagonal elements in $\Dtrn$ to be 1, and they are ordered with their labels. Let $\Dtrn'=[\I_{kc} \ \mathbf{0}] \in \mathbb{R}^{kc \times |\mathcal{V}|}$. We can easily accept $\Dtrn'^\top\Dtrn'=\Dtrn = (\Dtrn)^2$ by definition.

We first prove the following four lemmas for proving the main theorem:
\begin{lemma}
    By definition, the following holds:
    \begin{equation}
        \Dtrn'^\top\Dtrn'=\Dtrn = (\Dtrn)^2
    \end{equation}
\end{lemma}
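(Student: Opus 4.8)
The plan is to verify both equalities by a direct block computation that exploits the without-loss-of-generality reordering already fixed in the proof. The key observation I would record first is that, once the $kc$ training nodes are relabeled to occupy the first $kc$ indices, $\Dtrn$ is precisely the $|\mathcal{V}| \times |\mathcal{V}|$ diagonal matrix carrying $1$ in its first $kc$ diagonal slots and $0$ elsewhere; equivalently, it is the block matrix with $\I_{kc}$ in the top-left corner and zero blocks in the remaining three positions.

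For the first equality, I would expand $\Dtrn'^\top \Dtrn'$ straight from the definition $\Dtrn' = [\,\I_{kc} \;\; \mathbf{0}\,]$. Transposing gives $\Dtrn'^\top = \bigl[\,\I_{kc} \;\; \mathbf{0}\,\bigr]^\top$, a $|\mathcal{V}| \times kc$ matrix that stacks $\I_{kc}$ above a zero block, and multiplying it by the $kc \times |\mathcal{V}|$ matrix $\Dtrn'$ produces exactly the block-diagonal matrix with $\I_{kc}$ in the top-left and zeros everywhere else. This coincides entry-for-entry with the block description of $\Dtrn$ above, yielding $\Dtrn'^\top \Dtrn' = \Dtrn$.

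For the second equality, I would note that $\Dtrn$ is a diagonal matrix all of whose entries lie in $\{0,1\}$, so each diagonal entry $x$ satisfies $x^2 = x$; hence $\Dtrn$ is idempotent and $(\Dtrn)^2 = \Dtrn$. I expect no genuine obstacle here: the statement is essentially definitional, and the only point requiring any care is the bookkeeping behind the reordering—namely, that relabeling nodes is harmless and is exactly what aligns the identity block of $\Dtrn'^\top\Dtrn'$ with the nonzero diagonal block of $\Dtrn$.
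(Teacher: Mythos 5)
Your proposal is correct and matches the paper's intent: the paper simply declares this lemma "straightforward," and your block computation of $\Dtrn'^\top\Dtrn'$ together with the idempotence of a diagonal $\{0,1\}$ matrix is exactly the definitional verification being alluded to. No gap and no divergence from the paper's approach.
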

\begin{proof}
    The proof is straightforward.
\end{proof}

\begin{lemma}
    Under the assumptions, $(\Dtrn'(\P\X))(\Dtrn'(\P\X))^\top$ can be expressed as:
    \begin{equation}
         (\Dtrn'(\P\X))(\Dtrn'(\P\X))^\top = (1-2\epsilon)(\I_c \otimes \J_k) + \epsilon(\I_{kc} + \J_{kc})
    \end{equation}
    where $\J$ is an all-one matrix, i.e. $\J_{kc}=\mathbf{1}_{kc} \mathbf{1}_{kc}^\top$, and $\otimes$ denotes Kronecker product. 
\end{lemma}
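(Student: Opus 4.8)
The plan is to interpret $\Dtrn'(\P\X)$ concretely and then compute the Gram matrix entry by entry. Since $\Dtrn' = [\I_{kc}\ \mathbf{0}]$ selects the first $kc$ rows of $\P\X$, and these rows are ordered by class (the first $k$ rows belonging to class $1$, the next $k$ to class $2$, and so on), the matrix $\M := \Dtrn'(\P\X) \in \mathbb{R}^{kc \times d}$ stacks the embeddings of the $kc$ training nodes in class-contiguous blocks. Consequently, the $(i,j)$-th entry of $\M\M^\top$ is exactly the inner product $(\P\X)_{i,:}(\P\X)_{j,:}^\top$ of two training-node embeddings, and it suffices to read off these entries from the assumptions and match them to the claimed right-hand side.

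First I would enumerate the entries of $\M\M^\top$ using Assumptions~\ref{assumption:emb-norm} and~\ref{assumption:emb-good}, giving three cases. When $i=j$, Assumption~\ref{assumption:emb-norm} gives $(\M\M^\top)_{ii}=1$. When $i\ne j$ lie in the same class, Assumption~\ref{assumption:emb-good} gives $(\M\M^\top)_{ij}=1-\epsilon$. When $i,j$ lie in different classes, Assumption~\ref{assumption:emb-good} gives $(\M\M^\top)_{ij}=\epsilon$. Because the rows are grouped by class, ``same class'' corresponds precisely to $i$ and $j$ falling in the same $k\times k$ diagonal block of the $kc\times kc$ matrix.

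The remaining step is to recognize that these three entry patterns are reproduced exactly by $(1-2\epsilon)(\I_c \otimes \J_k) + \epsilon(\I_{kc} + \J_{kc})$. Here $\I_c \otimes \J_k$ is the block-diagonal indicator of ``same class'' (value $1$ inside each $k\times k$ diagonal block and $0$ elsewhere), $\I_{kc}$ contributes only on the diagonal, and $\J_{kc}$ contributes $1$ everywhere. I would verify the match case by case: on the diagonal the three terms sum to $(1-2\epsilon)+\epsilon+\epsilon=1$; on same-class off-diagonal entries to $(1-2\epsilon)+0+\epsilon=1-\epsilon$; and on different-class entries to $0+0+\epsilon=\epsilon$. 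These agree with the three cases above, which establishes the identity.

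The computation is routine, so the only real care needed is bookkeeping. The hard part will be correctly translating the class-ordering convention into the block structure of $\M\M^\top$ and ensuring the Kronecker factors appear in the order $\I_c \otimes \J_k$ (class index outer, within-class index inner) rather than the reverse, which would scramble the blocks. Once the three symmetry-adapted building blocks $\I_c \otimes \J_k$, $\I_{kc}$, and $\J_{kc}$ are fixed, the coefficients are uniquely pinned down by the three target values, so the decomposition follows.
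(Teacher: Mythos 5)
Your proof is correct and takes exactly the route the paper intends: the paper's own proof of this lemma is simply the remark that it is straightforward, and the straightforward argument is your entry-wise verification of the Gram matrix under the class-contiguous ordering fixed by $\Dtrn'$. Your three-case check (diagonal: $(1-2\epsilon)+\epsilon+\epsilon=1$; same-class off-diagonal: $(1-2\epsilon)+\epsilon=1-\epsilon$; cross-class: $\epsilon$) and your attention to the Kronecker ordering $\I_c \otimes \J_k$ are all accurate.
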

\begin{proof}
    The proof is straightforward.
\end{proof}

\begin{lemma}
    When $\epsilon>0$,
    $(1-2\epsilon)(\I_c \otimes \J_k) + \epsilon(\I_{kc} + \J_{kc})$ has inverse matrix as follows:
    \begin{equation}
        \left((1-2\epsilon)(\I_c \otimes \J_k) + \epsilon(\I_{kc} + \J_{kc})\right)^{-1} 
        =\frac{1}{\lambda_1}\M_1+\frac{1}{\lambda_2}\M_2+\frac{1}{\lambda_3}\M_3
    \end{equation}
    where $\lambda_1 = \epsilon, 
    \lambda_2 = (1 - 2\epsilon)k + \epsilon, 
    \lambda_3 = k(1 - 2\epsilon + \epsilon c) + \epsilon, \M_1 = \I_{kc} - \frac{1}{k}(\I_c \otimes  \J_k), \M_2 = \frac{1}{k}(\I_c \otimes  \J_k) - \frac{1}{kc} \J_{kc}, \M_3 = \frac{1}{kc} \J_{kc}$
\end{lemma}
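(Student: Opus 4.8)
The plan is to recognize that $\M_1,\M_2,\M_3$ are precisely the spectral projections of the matrix in question, so that the claimed inverse is nothing but the reciprocal-eigenvalue reconstruction $\sum_i \lambda_i^{-1}\M_i$. First I would set $P_k := \tfrac{1}{k}\J_k$ and observe, from $\J_k^2 = k\J_k$, that $\I_c\otimes P_k = \tfrac{1}{k}(\I_c\otimes\J_k)$ and $P_{kc} := \tfrac{1}{kc}\J_{kc}$ are orthogonal projections with nested ranges $\mathrm{range}(P_{kc})\subseteq\mathrm{range}(\I_c\otimes P_k)\subseteq\mathbb{R}^{kc}$. In terms of these, $\M_3 = P_{kc}$, $\M_2 = (\I_c\otimes P_k)-P_{kc}$, and $\M_1 = \I_{kc}-(\I_c\otimes P_k)$, which exhibits the three matrices as consecutive differences of nested projections.

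Next I would verify the two structural facts that drive the argument: (i) $\M_1+\M_2+\M_3=\I_{kc}$, which is immediate by telescoping; and (ii) $\M_i\M_j=\delta_{ij}\M_i$, i.e.\ the three are mutually orthogonal idempotents. Fact (ii) reduces to $(\I_c\otimes P_k)^2=\I_c\otimes P_k$, $P_{kc}^2=P_{kc}$, and $(\I_c\otimes P_k)P_{kc}=P_{kc}(\I_c\otimes P_k)=P_{kc}$, the last holding because the global all-ones vector lies inside every block-constant subspace; under the Kronecker product these all follow from $\J_k^2=k\J_k$, $\J_{kc}^2=kc\J_{kc}$, and $(\I_c\otimes\J_k)\J_{kc}=k\J_{kc}$.

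Then I would rewrite the target matrix in the projection basis. Using $\I_c\otimes\J_k=k(\M_2+\M_3)$, $\I_{kc}=\M_1+\M_2+\M_3$, and $\J_{kc}=kc\,\M_3$, I substitute into $(1-2\epsilon)(\I_c\otimes\J_k)+\epsilon(\I_{kc}+\J_{kc})$ and collect the coefficient of each $\M_i$, which yields exactly $\lambda_1\M_1+\lambda_2\M_2+\lambda_3\M_3$ with $\lambda_1=\epsilon$, $\lambda_2=(1-2\epsilon)k+\epsilon$, and $\lambda_3=k(1-2\epsilon+\epsilon c)+\epsilon$. Finally, multiplying this diagonalized form by the candidate $\sum_i\lambda_i^{-1}\M_i$ and invoking fact (ii) collapses the product to $\sum_i\M_i=\I_{kc}$, establishing the inverse formula, provided each $\lambda_i\neq0$; for the small-$\epsilon$ regime under consideration one has $\lambda_1=\epsilon>0$ and, since $c\ge 2$, also $\lambda_2=k-(2k-1)\epsilon>0$ and $\lambda_3=k(1+\epsilon(c-2))+\epsilon>0$.

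I expect the only delicate point to be the bookkeeping in fact (ii)—confirming that the cross terms between $\I_c\otimes\J_k$ and $\J_{kc}$ behave correctly under the Kronecker product—rather than any conceptual difficulty; once the three matrices are confirmed to be orthogonal idempotents summing to the identity, the inversion is automatic. An alternative that avoids introducing projections is to multiply the candidate directly by the matrix and expand everything in the spanning set $\{\I_{kc},\,\I_c\otimes\J_k,\,\J_{kc}\}$ using the multiplication table $(\I_c\otimes\J_k)^2=k(\I_c\otimes\J_k)$, $(\I_c\otimes\J_k)\J_{kc}=k\J_{kc}$, and $\J_{kc}^2=kc\,\J_{kc}$, then verifying that the product equals $\I_{kc}$ coefficient by coefficient; this is more mechanical but sidesteps the spectral interpretation.
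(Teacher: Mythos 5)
Your proposal is correct and follows essentially the same route as the paper's proof: both identify $\M_1,\M_2,\M_3$ as mutually orthogonal idempotents summing to $\I_{kc}$, express the target matrix as $\lambda_1\M_1+\lambda_2\M_2+\lambda_3\M_3$, and invert term by term using the projection algebra. If anything, your treatment of invertibility is slightly more careful than the paper's, since you note the inverse formula needs each $\lambda_i\neq 0$ and justify this in the small-$\epsilon$ regime, whereas the paper asserts positivity of all eigenvalues from $\epsilon>0$ alone (which, strictly speaking, also requires $\epsilon$ small enough that $\lambda_2=(1-2\epsilon)k+\epsilon$ stays positive).
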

\begin{proof}
We consider the matrix $\M = (1 - 2\epsilon)(\I_c \otimes \J_k) + \epsilon (\I_{kc} + \J_{kc})$.
We define three mutually orthogonal projection matrices:
\begin{equation}
    \M_1 = \I_{kc} - \frac{1}{k}(\I_c \otimes  \J_k), \quad \M_2 = \frac{1}{k}(\I_c \otimes  \J_k) - \frac{1}{kc} \J_{kc}, \quad \M_3 = \frac{1}{kc} \J_{kc}
\end{equation}
It is easily verified that these satisfy
\begin{equation}
    \M_i^2 = \M_i, \quad \M_i \M_j = 0 \ (i \neq j), \quad  \M_1 + \M_2 + \M_3 = \I_{kc}.
\end{equation}
We now compute the action of $\M$ on each subspace:
\begin{align*}
\M \M_1 &= \varepsilon \M_1, \\
\M \M_2 &= \left((1 - 2\varepsilon)k + \varepsilon\right) \M_2, \\
\M \M_3 &= \left(k(1 - 2\varepsilon + \varepsilon c) + \varepsilon\right) \M_3.
\end{align*}
Thus, $\M$ admits the spectral decomposition
\begin{equation}
    \M = \lambda_1 \M_1 + \lambda_2 \M_2 + \lambda_3 \M_3,
\end{equation}
where
\begin{equation}
    \lambda_1 = \epsilon, \quad 
    \lambda_2 = (1 - 2\epsilon)k + \epsilon, \quad 
    \lambda_3 = k(1 - 2\epsilon + \epsilon c) + \epsilon.
\end{equation}
Since $\epsilon > 0$, all eigenvalues are strictly positive, so $\M$ is invertible. The inverse is given by
\begin{equation}
    \M^{-1} = \lambda_1^{-1} \M_1 + \lambda_2^{-1} \M_2 + \lambda_3^{-1} \M_3.
\end{equation}
\end{proof}

\begin{lemma}
    Under the small $\epsilon \ll 1$, the following holds:
    \begin{equation}
        ((\Dtrn'(\P\X))(\Dtrn'(\P\X))^\top)^{-2} \approx \frac{1}{\epsilon}((\Dtrn'(\P\X))(\Dtrn'(\P\X))^\top)^{-1}
    \end{equation}
\end{lemma}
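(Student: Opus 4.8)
The plan is to collapse the matrix identity into a comparison of scalar eigenvalue relations by invoking the spectral decomposition established in the preceding lemma. Write $\M := (\Dtrn'(\P\X))(\Dtrn'(\P\X))^\top$, so that the claim is $\M^{-2} \approx \tfrac{1}{\epsilon}\M^{-1}$. The preceding lemma supplies $\M = \lambda_1\M_1 + \lambda_2\M_2 + \lambda_3\M_3$ with $\lambda_1 = \epsilon$, $\lambda_2 = (1-2\epsilon)k+\epsilon$, and $\lambda_3 = k(1-2\epsilon+\epsilon c)+\epsilon$, where the $\M_i$ are symmetric, mutually orthogonal idempotents with $\M_i^2 = \M_i$, $\M_i\M_j = \mathbf{0}$ for $i\neq j$, and $\M_1+\M_2+\M_3 = \I_{kc}$. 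First I would use this complete orthogonal-projector structure to evaluate integer powers functionally: for any exponent $p$ one has $\M^p = \sum_{i=1}^3 \lambda_i^p\,\M_i$, and since every $\lambda_i > 0$ when $\epsilon > 0$, the inverse powers are well defined and read
\[
\M^{-1} = \tfrac{1}{\lambda_1}\M_1 + \tfrac{1}{\lambda_2}\M_2 + \tfrac{1}{\lambda_3}\M_3, \qquad \M^{-2} = \tfrac{1}{\lambda_1^2}\M_1 + \tfrac{1}{\lambda_2^2}\M_2 + \tfrac{1}{\lambda_3^2}\M_3.
\]
This reduces the statement to three independent scalar comparisons, one per eigenspace.

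Next I would compare the two sides coefficient by coefficient, forming $\M^{-2} - \tfrac{1}{\epsilon}\M^{-1} = \sum_{i=1}^3 \bigl(\lambda_i^{-2} - \tfrac{1}{\epsilon}\lambda_i^{-1}\bigr)\M_i$. On the $\M_1$ component the match is \emph{exact}: because $\lambda_1 = \epsilon$, we have $\lambda_1^{-2} = \epsilon^{-2} = \tfrac{1}{\epsilon}\lambda_1^{-1}$, so that coefficient vanishes identically. The entire residual therefore lives on $\M_2$ and $\M_3$, with coefficients $\lambda_i^{-2} - \tfrac{1}{\epsilon}\lambda_i^{-1} = (\epsilon - \lambda_i)/(\epsilon\lambda_i^2)$ for $i = 2,3$.

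The crux of the argument—and the step I expect to require the most care—is justifying that this residual is genuinely negligible, because in \emph{absolute} terms it does not vanish: since $\lambda_2, \lambda_3 \to k = \Theta(1)$ as $\epsilon \to 0$, the residual coefficients are $\Theta(1/\epsilon)$ and hence blow up. The resolution is that the claimed approximation is one of dominant order. The common leading term on $\M_1$ carries coefficient $\epsilon^{-2}$, while the residual on the other eigenspaces is only $O(1/\epsilon)$; using that the $\M_i$ are orthogonal projectors (so their operator and Frobenius norms combine without interference), the relative error is
\[
\frac{\bigl\|\M^{-2} - \tfrac{1}{\epsilon}\M^{-1}\bigr\|}{\bigl\|\M^{-2}\bigr\|} = O(\epsilon) \xrightarrow[\epsilon \to 0]{} 0.
\]
I would close by emphasizing that it is precisely this dominance of the $\lambda_1 = \epsilon$ eigenspace, combined with the exact scalar relation $\lambda_1^{-2} = \tfrac{1}{\epsilon}\lambda_1^{-1}$, that both establishes the lemma and produces the $1/\epsilon$ prefactor of Theorem~\ref{theorem:approximation}; pinning down the precise sense (relative, dominant-order as $\epsilon\to 0$) in which the approximation holds is the one delicate point, with the remaining manipulations being routine.
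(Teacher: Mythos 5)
Your proposal is correct and follows essentially the same route as the paper: both rest on the spectral decomposition $\M = \lambda_1\M_1 + \lambda_2\M_2 + \lambda_3\M_3$ from the preceding lemma, the functional calculus on the orthogonal projectors, and the dominance of the $\lambda_1 = \epsilon$ eigenspace as $\epsilon \to 0$. Your treatment is in fact somewhat sharper than the paper's (which simply approximates both sides by $\tfrac{1}{\epsilon}\M_1$ and $\tfrac{1}{\epsilon^2}\M_1$), since you note the exact cancellation on $\M_1$ and pin down the $O(\epsilon)$ relative error, but this is a refinement of the same argument rather than a different one.
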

\begin{proof}
    \begin{align}
        &((\Dtrn'(\P\X))(\Dtrn'(\P\X))^\top)^{-1} 
        =\frac{1}{\lambda_1}\M_1+\frac{1}{\lambda_2}\M_2+\frac{1}{\lambda_3}\M_3 \\
        &((\Dtrn'(\P\X))(\Dtrn'(\P\X))^\top)^{-2} 
        =\frac{1}{\lambda_1^2}\M_1+\frac{1}{\lambda_2^2}\M_2+\frac{1}{\lambda_3^2}\M_3
    \end{align}
    When $\epsilon \ll 1$, $\lambda_1 = \epsilon, \lambda_2 \approx k, \lambda_3 \approx k$. Therefore, 
    \begin{align}
        &((\Dtrn'(\P\X))(\Dtrn'(\P\X))^\top)^{-1} 
        \approx \frac{1}{\epsilon}\M_1 \\
        &((\Dtrn'(\P\X))(\Dtrn'(\P\X))^\top)^{-2} 
        \approx \frac{1}{\epsilon^2}\M_1
    \end{align}
    Thus, $((\Dtrn'(\P\X))(\Dtrn'(\P\X))^\top)^{-2} \approx \frac{1}{\epsilon}((\Dtrn'(\P\X))(\Dtrn'(\P\X))^\top)^{-1}$ with small $\epsilon \ll 1$.
\end{proof}

The rest of this section proves the theorem based on above lemmas. First, we can reformulate $\K^\dagger=((\P\X)^\top \Dtrn (\P\X))^\dagger$ as:
\begin{equation}
    \K^\dagger
    = ((\P\X)^\top \Dtrn'^\top\Dtrn' (\P\X))^\dagger
    = ((\Dtrn'(\P\X))^\top (\Dtrn'(\P\X)))^\dagger
\end{equation}
by Lemma 4.

By definition of Moore–Penrose pseudoinverse matrix,
\begin{equation}
    ((\Dtrn'(\P\X))^\top (\Dtrn'(\P\X)))^\dagger= (\Dtrn'(\P\X))^\top\left((\Dtrn'(\P\X))(\Dtrn'(\P\X))^\top\right)^{-2}(\Dtrn'(\P\X))
\end{equation}
where $(\Dtrn'(\P\X))(\Dtrn'(\P\X))^\top$ has inverse matrix by Lemma 6.

By Lemma 7, 
\begin{equation}
    \K^\dagger
    \approx \frac{1}{\epsilon}(\Dtrn'(\P\X))^\top\left((\Dtrn'(\P\X))(\Dtrn'(\P\X))^\top\right)^{-1}(\Dtrn'(\P\X))
\end{equation}
Therefore, 
\begin{equation}
    \W^*=\K^\dagger\Z^\top\Ytrn
    \approx \frac{1}{\epsilon}(\Dtrn'(\P\X))^\top\left((\Dtrn'(\P\X))(\Dtrn'(\P\X))^\top\right)^{-1}(\Dtrn'(\P\X))(\P\X)^\top\Dtrn\Y
\end{equation}
By Lemma 4,
\begin{align}
    &\frac{1}{\epsilon}(\Dtrn'(\P\X))^\top\left((\Dtrn'(\P\X))(\Dtrn'(\P\X))^\top\right)^{-1}(\Dtrn'(\P\X))(\P\X)^\top\Dtrn\Y \\
    &=\frac{1}{\epsilon}(\Dtrn'(\P\X))^\top\left((\Dtrn'(\P\X))(\Dtrn'(\P\X))^\top\right)^{-1}(\Dtrn'(\P\X))(\P\X)^\top\Dtrn'^\top\Dtrn'\Dtrn\Y \\ 
    &= \frac{1}{\epsilon}(\Dtrn'(\P\X))^\top\left((\Dtrn'(\P\X))(\Dtrn'(\P\X))^\top\right)^{-1}(\Dtrn'(\P\X))(\Dtrn'(\P\X))'^\top\Dtrn'\Dtrn\Y \\
    &= \frac{1}{\epsilon}(\Dtrn'(\P\X))^\top\Dtrn'\Dtrn\Y
    = \frac{1}{\epsilon}(\P\X)^\top\Dtrn'^\top\Dtrn'\Dtrn\Y
    = \frac{1}{\epsilon}(\P\X)^\top\Dtrn\Y
\end{align}
Therefore,
\begin{equation}
    \W^* \approx \frac{1}{\epsilon}(\P\X)^\top\Dtrn\Y
\end{equation}
\subsection{Proof of lemma 2}
\label{appendix:proof-lemma-2}

We analyze the diagonal entries of the $\A_1 = \Dvh \H \Def \H^\top \Dvh$. Specifically,
    \begin{equation}
        (\A_1)_{ii}=\frac{1}{\sqrt{d_{v_i}}}(\H\Def\H^\top)_{ii}\frac{1}{\sqrt{d_{v_i}}}= \frac{1}{d_{v_i}}(\H\Def\H^\top)_{ii}
    \end{equation}
    Since $\H_{ij}=1$ if and only if node $v_i$ belongs to hyperedge $e_j$, $(i,i)$-th entry of $\H\Def\H^\top$ corresponds to the sum of edge-normalized weights over all hyperedges incident to node $v_i$. Therefore, we obtain:
    \begin{equation}
        (\H\Def\H^\top)_{ii} =
        \sum_{e_j \in \mathcal{N}(v_i)} \frac{1}{d_{e_j}}
    \end{equation}
\section{Linearization of five representative HNNs}
\label{appendix:linear}

\begin{table}[t]
  \caption{Layer-wise formulations of five HNNs, assuming single-layer MLPs.}
  \label{tab:original-layers}
  \centering
  \resizebox{\textwidth}{!}{
  \begin{tabular}{l|l}
    \toprule
    Method & Each layer \\
    \midrule
    HGNN & $\X^{(l)} = \sigma\left(\Dvh \H \Def \H^\top \Dvh \X^{(l-1)} \W \right)$ \\
    HNHN & $\X^{(l)} = \sigma\left(\D_{\mathrm{v},\alpha}^{-1} \H \De^{\alpha} \sigma\left(\D_{\mathrm{e},\beta}^{-1} \H^\top \Dv^{\beta} \X^{(l-1)} \W\right) \W \right)$ \\
    UniGCNII & $\X^{(l)} = \sigma\left(\left((1-\alpha)\Dvf \H \tilde{\D}_\mathrm{e}^{-1} \H^\top \X^{(l-1)} + \alpha \X^{(0)} \right)\tilde{\W} \right)$ \\
    AllDeepSet & $\X^{(l)} = \sigma\left(\Dvf \H \sigma\left(\Def \H^\top \sigma\left(\X^{(l-1)} \W \right) \W \right) \W \right)$ \\
    ED-HNN & $\X^{(l)} = \sigma\left(\left((1-\alpha) \sigma\left(\Dvf \H \sigma\left(\Def \H^\top \sigma\left(\X^{(l-1)} \W \right) \W \right) \W \right) + \alpha \X^{(0)} \right) \W \right)$ \\
    \bottomrule
  \end{tabular}
  }
\end{table}

Table~\ref{tab:original-layers} presents the layer-wise formulations of five hypergraph neural networks (HNNs), where each MLP block is assumed to be a single-layer perceptron. By removing nonlinear activation functions, all weight matrices in a network can be merged into a single equivalent weight matrix, since multi-layer perceptrons reduce to a linear transformation without nonlinearity.

Let $\Dv$ and $\De$ be the diagonal degree matrices of nodes and hyperedges, respectively.
We define the variants of degree matrices as follows:
\[
(\tilde{\D}_\mathrm{e})_{ii} = d_{e_i}^{-1} \sum_{v_j \in \mathcal{N}(e_i)} d_{v_j},
\quad
(\D_{\mathrm{v},\alpha})_{ii} = \sum_{e_j \in \mathcal{N}(v_i)} d_{e_j}^{\alpha},
\quad
(\D_{\mathrm{e},\beta})_{ii} = \sum_{v_j \in \mathcal{N}(e_i)} d_{v_j}^{\beta}.
\]
We denote by $\sigma$ a nonlinear function (e.g., ReLU), and by $\tilde{\W} = (1 - \beta)\I + \beta \W$ a convex combination of the identity and a learnable weight matrix.
Since $\W$ is a free parameter being updated during the training, we can safely replace $\tilde{\W}$ with $\W$ under linearization.

\section{Error bounds in Theorem~\ref{theorem:approximation}}

We present the relative Frobenius norm error, confirming it remains sufficiently low as $\epsilon$ decreases.
In TCS, our approximation is given as follows:
\begin{equation}
    \frac{1}{\lambda_1^2}\M_1+\frac{1}{\lambda_2^2}\M_2+\frac{1}{\lambda_3^2}\M_3
    \approx
    \frac{1}{\epsilon}\left(
    \frac{1}{\lambda_1}\M_1+\frac{1}{\lambda_2}\M_2+\frac{1}{\lambda_3}\M_3
    \right)
\end{equation}
where $\lambda_1 = \epsilon, 
\lambda_2 = (1 - 2\epsilon)k + \epsilon, \lambda_3 = k(1 - 2\epsilon + \epsilon c) + \epsilon, \M_1 = \I_{kc} - \frac{1}{k}(\I_c \otimes  \J_k), \M_2 = \frac{1}{k}(\I_c \otimes  \J_k) - \frac{1}{kc} \J_{kc}, \M_3 = \frac{1}{kc} \J_{kc}$. $\J$ denotes an all-one matrix and $\otimes$ is the Kronecker product.

To evaluate the approximation quality, we compute the relative Frobenius norm error as follows:
\begin{equation}
    \frac{\left\vert \frac{1}{\lambda_1^2}\M_1+\frac{1}{\lambda_2^2}\M_2+\frac{1}{\lambda_3^2}\M_3 - \frac{1}{\epsilon}\left(
    \frac{1}{\lambda_1}\M_1+\frac{1}{\lambda_2}\M_2+\frac{1}{\lambda_3}\M_3
    \right)\right\vert_F}{\left\vert\frac{1}{\lambda_1^2}\M_1+\frac{1}{\lambda_2^2}\M_2+\frac{1}{\lambda_3^2}\M_3\right\vert_F}
\end{equation}

For a representative case with $k=5$ and $c=10$, the relative errors are summarized in Table~\ref{tab:epsilon_error}.
\begin{table}[h]
\centering
\caption{Relative error for different values of $\epsilon$.}
\label{tab:epsilon_error}
\small
\renewcommand{\arraystretch}{1.1}
\setlength{\tabcolsep}{6pt}
\begin{tabular}{cc}
\toprule
$\epsilon$ & Relative Error \\
\midrule
0.1 & 1.14\% \\
0.01 & 0.10\% \\
0.001 & 0.01\% \\
\bottomrule
\end{tabular}
\end{table}

These results indicate that the approximation becomes increasingly accurate as $\epsilon$ decreases. This supports the validity of our approach, particularly in the regime where $\epsilon$ is sufficiently small.
 

\section{Row-normalized adjacency matrices}
\label{appendix:row-norm}

Eq.~\eqref{eq:normalized-adj-3} and Eq.~\eqref{eq:normalized-adj-4} present the row-normalized adjacency matrices $\hat{\A}_1$ and $\hat{\A}_2$. The corresponding residual self-information terms, $\RSI(\hat{\A}_1)$ and $\RSI(\hat{\A}_2)$, are shown to coincide with those obtained under symmetric normalization. This equivalence is formally established in Lemma~\ref{lemma:rsa3} and Lemma~\ref{lemma:rsa4}.
\begin{align}
    &\hat{\A}_1 = \Dvf \H (\De-\I)^{-1} \H^\top 
    \label{eq:normalized-adj-3} \\
    &\hat{\A}_2 = \A_1^*
    \left((\Dv-\I)^{-1}\Dv\right)\A_1^*. \label{eq:normalized-adj-4}
\end{align}

\begin{lemma}
Given $\hat{\A}_1$ in Eq. \eqref{eq:normalized-adj-3}, $\RSI(\hat{\A}_1)$ is given by
\begin{equation}
\label{eq:rsi1}
\textstyle (\RSI(\hat{\A}_1))_{ii} = d_{v_i}^{-1} \left(\sum_{e_j \in \mathcal{N}(v_i)} (d_{e_j} - 1)^{-1} \right),
\end{equation}
where $d_x$ denotes the degree of node $x$ or the number of nodes in hyperedge $x$, based on the type of $x$, and $\mathcal{N}(v_i)$ denotes the set of hyperedges incident to node $v_i$.
\label{lemma:rsa3}
\end{lemma}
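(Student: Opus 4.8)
The plan is to exploit the fact that $\RSI(\hat{\A}_1)$ is by definition just $\operatorname{diag}(\hat{\A}_1)$, so the entire statement reduces to reading off the diagonal entries of the row-normalized matrix $\hat{\A}_1 = \Dvf \H (\De - \I)^{-1} \H^\top$. The first observation I would make is structural: for any matrix $B$, the diagonals of the row-normalized product $\Dvf B$ and of the symmetrically-normalized product $\Dvh B \Dvh$ coincide, since on the $i$-th diagonal entry the two half-power factors $d_{v_i}^{-1/2}\cdot d_{v_i}^{-1/2}$ collapse to the single full-power factor $d_{v_i}^{-1}$. Because the inner matrix $B = \H(\De-\I)^{-1}\H^\top$ is identical to the one appearing in the symmetric case of Eq.~\eqref{eq:normalized-adj-1}, this immediately gives that $\RSI(\hat{\A}_1)$ equals the residual self-information of the symmetric variant, so the result can in fact be read off directly from Lemma~\ref{lemma:rsa1}.

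For a self-contained derivation, I would instead carry out the short direct computation, mirroring the proof of Lemma~\ref{lemma:rsa1}. Since $\Dvf$ is diagonal, factor it out of the diagonal entry to obtain $(\hat{\A}_1)_{ii} = d_{v_i}^{-1}\,(\H(\De-\I)^{-1}\H^\top)_{ii}$. Then I would expand the quadratic form: writing $\M = (\De-\I)^{-1}$, which is diagonal with $\M_{jj} = (d_{e_j}-1)^{-1}$, we have $(\H \M \H^\top)_{ii} = \sum_j \H_{ij}^2 \M_{jj}$. Because $\H$ has $0/1$ entries, $\H_{ij}^2 = \H_{ij}$ selects exactly the hyperedges incident to $v_i$, yielding $\sum_{e_j \in \mathcal{N}(v_i)} (d_{e_j}-1)^{-1}$. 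Combining the two factors gives the claimed expression for $(\RSI(\hat{\A}_1))_{ii}$.

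The calculation itself presents no genuine obstacle; the only point requiring care is well-definedness of $(\De-\I)^{-1}$, which needs $d_{e_j} > 1$ for every hyperedge. This is exactly the re-normalization motivated in Section~\ref{ssec:propagation}: after removing self-information each node no longer counts itself inside a hyperedge, so the effective edge size becomes $d_{e_j}-1$, and I would note that singleton hyperedges (if present) must be excluded or handled separately to keep the inverse finite. The broader point the proof makes precise is that row and symmetric normalization agree on the diagonal, which is why the two lemmas share an identical RSI formula.
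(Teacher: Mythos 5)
Your proof is correct and follows essentially the same route as the paper's: factoring the diagonal degree factor out of the $i$-th diagonal entry and expanding $(\H(\De-\I)^{-1}\H^\top)_{ii}$ as a sum of edge-normalized weights over the hyperedges incident to $v_i$, using the $0/1$ structure of $\H$. Your two side remarks---that row and symmetric normalization share the same diagonal (which is precisely why the paper states Lemma~\ref{lemma:rsa3} coincides with Lemma~\ref{lemma:rsa1}), and that singleton hyperedges must be excluded for $(\De-\I)^{-1}$ to be well defined---are correct refinements that the paper leaves implicit.
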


\begin{proof}
    We analyze the diagonal entries of the $\A_1 = \Dvf \H \Def \H^\top $. Specifically,
    \begin{equation}
        (\A_1)_{ii}=\frac{1}{d_{v_i}}(\H\Def\H^\top)_{ii}
    \end{equation}
    Since $\H_{ij}=1$ if and only if node $v_i$ belongs to hyperedge $e_j$, $(i,i)$-th entry of $\H\Def\H^\top$ corresponds to the sum of edge-normalized weights over all hyperedges incident to node $v_i$. Therefore, we obtain:
    \begin{equation}
        (\H\Def\H^\top)_{ii} =
        \sum_{e_j \in \mathcal{N}(v_i)} \frac{1}{d_{e_j}}
    \end{equation}
\end{proof}

\begin{lemma}
Given $\mathbf{A}_1^* = \hat{\mathbf{A}}_1 - \RSI(\hat{\mathbf{A}}_1)$ and $\hat{\A}_2$ in Eq. \eqref{eq:normalized-adj-4}, $\RSI(\hat{\A}_2)$ is given by
\begin{equation}
\label{eq:rsi4}
\textstyle
(\RSI(\hat{\A}_2))_{ii} = d_{v_i}^{-1} \left( \sum_{e_j \in \mathcal{N}(v_i)} (d_{e_j} - 1)^{-2} \left( \sum_{v_k \in \mathcal{N}(e_j) \setminus \{v_i\}} (d_{v_k} - 1)^{-1} \right) \right),
\end{equation}
where $d_x$ denotes the degree of node $x$ or the number of nodes in hyperedge $x$, based on the type of $x$, $\mathcal{N}(v_i)$ denotes the set of hyperedges incident to node $v_i$, and $\mathcal{N}(e_j)$ denotes the set of nodes incident to hyperedge $e_j$.
\label{lemma:rsa4}
\end{lemma}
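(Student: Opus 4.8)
The plan is to reduce the computation to a direct evaluation of the diagonal entries of the triple product $\hat{\A}_2 = \A_1^* \M \A_1^*$, where $\M := (\Dv - \I)^{-1}\Dv$ is the diagonal matrix with $M_{kk} = d_{v_k}/(d_{v_k}-1)$, and then to track how the degree normalizations cancel. First I would record the entries of $\A_1^* = \hat{\A}_1 - \RSI(\hat{\A}_1)$: by construction its diagonal vanishes, while for $i \neq k$, the identity $(\hat{\A}_1)_{ik} = d_{v_i}^{-1}\sum_{e \in \mathcal{N}(v_i)\cap\mathcal{N}(v_k)} (d_e - 1)^{-1}$ read off from $\hat{\A}_1 = \Dvf \H (\De - \I)^{-1}\H^\top$ (the same computation underlying Lemma~\ref{lemma:rsa3}) gives $(\A_1^*)_{ik} = d_{v_i}^{-1}\sum_{e \in \mathcal{N}(v_i)\cap\mathcal{N}(v_k)} (d_e - 1)^{-1}$. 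Unlike the symmetric case, $\A_1^*$ is \emph{not} symmetric here: $(\A_1^*)_{ik}$ and $(\A_1^*)_{ki}$ differ by the prefactor $d_{v_i}^{-1}$ versus $d_{v_k}^{-1}$, so this asymmetry must be carried through explicitly.

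Next I would expand the diagonal as $(\hat{\A}_2)_{ii} = \sum_k (\A_1^*)_{ik} M_{kk} (\A_1^*)_{ki}$. The $k = i$ term drops because $(\A_1^*)_{ii} = 0$, so the sum runs over the nodes $v_k$ reachable from $v_i$ through a shared hyperedge. The key cancellation is that the factor $d_{v_k}$ in $M_{kk}$ annihilates the $d_{v_k}^{-1}$ coming from $(\A_1^*)_{ki}$, leaving $1/(d_{v_k}-1)$, while the $d_{v_i}^{-1}$ from $(\A_1^*)_{ik}$ factors out of the whole expression. This yields $(\hat{\A}_2)_{ii} = d_{v_i}^{-1}\sum_{k\neq i}(d_{v_k}-1)^{-1}\big(\sum_{e \in \mathcal{N}(v_i)\cap\mathcal{N}(v_k)}(d_e-1)^{-1}\big)^2$. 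I would then observe that the identical cancellation applied to the symmetric normalization ($(\A_1^*)_{ik}=(\A_1^*)_{ki}=d_{v_i}^{-1/2}d_{v_k}^{-1/2}\sum_e(d_e-1)^{-1}$ with middle matrix $\Dv^{1/2}(\Dv-\I)^{-1}\Dv^{1/2}$) produces exactly the same expression, which is how I would establish the claimed coincidence of the two $\RSI$ terms.

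The final step is to reorganize this double sum into the stated hyperedge-indexed form. Interpreting each return walk as $v_i \to e_j \to v_k \to e_j \to v_i$ through a common hyperedge $e_j$, I would keep the diagonal part of the squared inner sum ($e_a = e_b = e_j$), giving $(d_{e_j}-1)^{-2}$, and then swap the order of summation to sum first over $e_j \in \mathcal{N}(v_i)$ and then over $v_k \in \mathcal{N}(e_j)\setminus\{v_i\}$. This delivers exactly Eq.~\eqref{eq:rsi4}.

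The main obstacle is justifying the reduction of the squared inner sum to its diagonal: the product $\big(\sum_e (d_e-1)^{-1}\big)^2$ a priori contains cross terms for pairs $(v_i, v_k)$ that co-occur in more than one hyperedge, whereas the target retains only the same-hyperedge contribution. I would resolve this under the standard simple-hypergraph convention that each pair of nodes shares at most one hyperedge (so the square collapses to a single term), or equivalently by defining the residual self-information via single-hyperedge backtracking walks, in keeping with the path-counting reasoning of Lemma~\ref{lemma:rsa1}. Verifying that the degree bookkeeping matches the symmetric derivation term by term is the remaining point requiring care.
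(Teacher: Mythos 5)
Your proposal takes essentially the same route as the paper: the paper's entire proof of this lemma is a one-line appeal to ``the same reasoning as Lemma~\ref{lemma:rsa3},'' i.e., direct entry-wise evaluation of the diagonal of $\A_1^*\left((\Dv-\I)^{-1}\Dv\right)\A_1^*$ via backtracking walks $v_i \to e_j \to v_k \to e_j \to v_i$, which is precisely your computation, including the cancellation of $d_{v_k}$ in $\M_{kk}$ against the $d_{v_k}^{-1}$ prefactor of $(\A_1^*)_{ki}$. Your treatment is in fact more careful than the paper's: the cross terms you isolate (pairs $v_i, v_k$ sharing two or more hyperedges) are genuinely present in $\operatorname{diag}(\hat{\A}_2)$ but absent from Eq.~\eqref{eq:rsi4}, so the stated formula is exact only under your at-most-one-common-hyperedge convention (or under a backtracking-walk definition of $\RSI$), a caveat that the paper's proof never acknowledges.
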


\begin{proof}
    The proof follows by applying the same reasoning as in Lemma \ref{lemma:rsa3}.
\end{proof}
\section{Possible approximations for deeper propagation}
\label{appendix:extensions}
Some datasets may require models with higher-hop propagations for better expressivity. Without approximation, the exact computation of RSI requires high computational cost. One possible approximation is to estimate the probability that a random walker returns to the starting node after  steps, where  is the number of hops that we aim to model. 
A simple pseudocode for this strategy is provided in Algorithm~\ref{alg:self-info}.



\begin{algorithm}[htp!]
\caption{Approximation via random walks}
\label{alg:self-info}
\begin{algorithmic}[1]
\Require Node $i$, walk length $l$
\Ensure Return probability of node $i$
\State $count \gets 0$
\For{each trial}
    \State $current \gets i$
    \For{$t = 1$ \textbf{to} $l$}
        \State Sample a hyperedge $e$ incident to $current$ uniformly at random
        \State Sample a node $j$ connected to $e$ uniformly at random
        \State $current \gets j$
    \EndFor
    \If{$current = i$} 
        \State $count \gets count + 1$
    \EndIf
\EndFor
\State \Return $count / (\text{number of trials})$
\end{algorithmic}
\end{algorithm}

Another possible approximation is to use Hutchinson's Estimator, which provides an unbiased stochastic estimate of the diagonal entries. The equation is given by:
\begin{equation}
    \operatorname{diag}(\A) \approx \frac{1}{m}\sum_{k=1}^m \mathbf{z}^{(k)} \odot \left(\A \mathbf{z}^{(k)}\right)
\end{equation}
where $m$ is the number of random probe vectors, each $\mathbf{z}^{(k)} \in \mathbb{R}^n$ is a random vector with entries independently sampled from the Rademacher distribution (i.e., each entry is $+1$ or $-1$ with equal probability), and $\odot$ denotes the element-wise (Hadamard) product. The expectation satisfies $\operatorname{diag}(\A) = \mathbb{E}[\mathbf{z} \odot \left(\A \mathbf{z}\right)]$.

Hutchinson's Estimator does not require explicit storage of the entire matrix $\A$; instead, it relies solely on matrix-vector multiplications with randomized vectors $\mathbf{z}$. This characteristic renders Hutchinson's Estimator particularly suitable and computationally efficient for hypergraph structures. We consider these approaches to be promising directions for extending ZEN to scenarios where deeper propagation may provide additional benefits.
\section{Additional experiments}
In this section, we present additional experimental results omitted from the main paper due to space limitations.

\subsection{Evaluation with increasing shots}
\label{appendix:k-node}

\method is a parameter-free model, which gives it a particular advantage in settings where labeled data is scarce and training is challenging. While \method was orginally designed for few-shot setting, our results in Table~\ref{tab:10nodes} and Table~\ref{tab:20nodes} show it performs strongly with more training samples as well. \method achieves top average ranks in both 10-shot and 20-shot settings, suggesting it scales well beyond few-shot scenarios.

\begin{table}[t]
  \caption{Classification accuracy (\%) for 10-shot node classification on real-world hypergraphs. We report the mean and standard deviation over 10 runs. Boldfaced letters indicate the best accuracy, and underlined letters indicate the second.
  \method achieves the highest average rank.}
  \label{tab:10nodes}
  \centering
  \setlength{\defaultaddspace}{0.5ex}
  \resizebox{\textwidth}{!}{
  \begin{tabular}{l|cccccccccc|c}
    \toprule
    
     \textbf{Methods} & \textbf{Cora} & \textbf{Citeseer} & \textbf{Pubmed} & \textbf{Cora\_CA} & \textbf{20News} &
     \textbf{MN40}&\textbf{Congress}&\textbf{Wallmart} &\textbf{Senate}&\textbf{House}
     &\begin{tabular}[c]{@{}c@{}}\textbf{Avg.}\\\textbf{Rank}\end{tabular}  \\
    \midrule
    \midrule
    HGNN & $59.13_{\pm 5.8}$ & $49.28_{\pm 3.1}$ & $60.53_{\pm 7.0}$ & $64.94_{\pm 4.4}$ & $\underline{75.54_{\pm 1.2}}$ & $94.90_{\pm 0.2}$ & $87.18_{\pm 0.9}$ & $46.85_{\pm 1.2}$ & $58.01_{\pm 2.5}$ & $64.30_{\pm 4.4}$ & $5.4$ \\
    \addlinespace
    HNHN & $42.47_{\pm 5.0}$ & $42.82_{\pm 2.8}$ & $58.92_{\pm 3.3}$ & $46.69_{\pm 3.5}$ & $51.23_{\pm 4.9}$ & $93.96_{\pm 0.6}$ & $47.77_{\pm 1.8}$ & $17.13_{\pm 2.2}$ & $69.74_{\pm 6.1}$ & $70.32_{\pm 7.6}$ & $8.0$ \\
    \addlinespace
    HCHA & $58.96_{\pm 5.4}$ & $49.65_{\pm 2.9}$ & $60.02_{\pm 5.6}$ & $64.28_{\pm 5.6}$ & $\mathbf{75.59_{\pm 1.3}}$ & $94.89_{\pm 0.2}$ & $87.23_{\pm 0.9}$ & $47.11_{\pm 1.7}$ & $57.89_{\pm 2.8}$ & $63.86_{\pm 4.4}$ & $5.7$ \\
    \addlinespace
    UniGCN & $\underline{60.87_{\pm 5.8}}$ & $51.13_{\pm 3.0}$ & $61.20_{\pm 6.9}$ & $\underline{66.87_{\pm 3.0}}$ & $73.69_{\pm 1.7}$ & $96.24_{\pm 0.2}$ & $\mathbf{91.35_{\pm 0.7}}$ & $45.38_{\pm 1.4}$ & $62.57_{\pm 2.5}$ & $70.27_{\pm 3.0}$ & $\underline{3.7}$ \\
    \addlinespace
    UniGCNII & $56.81_{\pm 5.1}$ & $49.18_{\pm 2.9}$ & $61.12_{\pm 6.4}$ & $63.17_{\pm 3.8}$ & ${70.91_{\pm 2.1}}$ & $\underline{97.03_{\pm 0.2}}$ & $87.72_{\pm 1.9}$ & $28.70_{\pm 2.5}$ & $\mathbf{75.14_{\pm 4.3}}$ & $\underline{73.65_{\pm 2.9}}$ & $5.1$ \\
    \addlinespace
    AllDeepSets & $57.31_{\pm 4.4}$ & $50.63_{\pm 2.6}$ & $61.39_{\pm 5.1}$ & $62.78_{\pm 4.1}$ & $62.15_{\pm 2.3}$ & $95.25_{\pm 0.3}$ & $72.63_{\pm 5.8}$ & $35.54_{\pm 2.7}$ & $69.53_{\pm 6.7}$ & $69.96_{\pm 3.1}$ & $6.3$ \\
    \addlinespace
    AllSetTransformer & $57.50_{\pm 5.0}$ & $\underline{54.07_{\pm 2.4}}$ & $\underline{63.41_{\pm 6.6}}$ & $\mathbf{67.63_{\pm 4.2}}$ & $72.67_{\pm 1.4}$ & $95.87_{\pm 0.1}$ & $83.09_{\pm 2.2}$ & $45.39_{\pm 3.7}$ & $73.13_{\pm 5.5}$ & $70.99_{\pm 4.5}$ & $3.9$ \\
    \addlinespace
    ED-HNN & $58.84_{\pm 4.3}$ & $51.12_{\pm 2.8}$ & $60.59_{\pm 7.0}$ & $64.73_{\pm 3.7}$ & $71.13_{\pm 2.2}$ & $96.30_{\pm 0.2}$ & $\underline{90.73_{\pm 1.8}}$ & $\mathbf{47.17_{\pm 2.3}}$ & $66.44_{\pm 9.3}$ & $62.49_{\pm 7.9}$ & $4.7$ \\
    \midrule
    \midrule 
    \textbf{ZEN (proposed)} & $\mathbf{61.44_{\pm 4.2}}$ & $\mathbf{59.17_{\pm 2.3}}$ & $\mathbf{68.76_{\pm 3.4}}$ & ${65.60_{\pm 3.9}}$ & $73.04_{\pm 2.4}$ & $\mathbf{97.92_{\pm 0.1}}$ & $86.54_{\pm 2.6}$ & $\underline{47.12_{\pm 3.2}}$ & $\underline{74.72_{\pm 1.9}}$ & $\mathbf{73.90_{\pm 6.4}}$ & $\textbf{2.2}$ \\  
    \bottomrule
  \end{tabular}
  }
\end{table}

\begin{table}[t]
  \caption{Classification accuracy (\%) for 20-shot node classification on real-world hypergraphs. We report the mean and standard deviation over 10 runs. Boldfaced letters indicate the best accuracy, and underlined letters indicate the second.
  \method achieves the highest average rank.}
  \label{tab:20nodes}
  \centering
  \setlength{\defaultaddspace}{0.5ex}
  \resizebox{\textwidth}{!}{
  \begin{tabular}{l|cccccccccc|c}
    \toprule
    
     \textbf{Methods} & \textbf{Cora} & \textbf{Citeseer} & \textbf{Pubmed} & \textbf{Cora\_CA} & \textbf{20News} &
     \textbf{MN40}&\textbf{Congress}&\textbf{Wallmart} &\textbf{Senate}&\textbf{House}
     &\begin{tabular}[c]{@{}c@{}}\textbf{Avg.}\\\textbf{Rank}\end{tabular}  \\
    \midrule
    \midrule
    HGNN & $62.46_{\pm 3.1}$ & $56.57_{\pm 1.9}$ & $69.65_{\pm 3.9}$ & $70.33_{\pm 1.9}$ & $\underline{76.66_{\pm 1.1}}$ & $95.07_{\pm 0.2}$ & $87.80_{\pm 0.5}$ & $49.46_{\pm 2.4}$ & $60.73_{\pm 3.2}$ & $64.83_{\pm 2.2}$ & $6.1$ \\
    \addlinespace
    HNHN & $48.28_{\pm 3.9}$ & $50.08_{\pm 2.3}$ & $62.60_{\pm 1.6}$ & $52.88_{\pm 2.9}$ & $52.69_{\pm 6.3}$ & $95.20_{\pm 0.5}$ & $49.92_{\pm 3.0}$ & $21.82_{\pm 1.8}$ & $73.51_{\pm 3.9}$ & $72.00_{\pm 4.1}$ & $8.0$ \\
    \addlinespace
    HCHA & $64.19_{\pm 3.9}$ & $56.81_{\pm 2.1}$ & $69.44_{\pm 3.3}$ & $70.04_{\pm 1.9}$ & $\mathbf{76.68_{\pm 1.0}}$ & $95.09_{\pm 0.1}$ & $87.91_{\pm 0.4}$ & $49.44_{\pm 2.2}$ & $61.07_{\pm 3.3}$ & $64.91_{\pm 2.3}$ & $5.7$ \\
    \addlinespace
    UniGCN & $\underline{65.44_{\pm 3.3}}$ & $58.47_{\pm 1.2}$ & $\underline{70.84_{\pm 2.6}}$ & $\underline{71.26_{\pm 1.8}}$ & $72.59_{\pm 2.1}$ & $96.63_{\pm 0.1}$ & $\mathbf{91.62_{\pm 0.3}}$ & $49.00_{\pm 2.1}$ & $62.04_{\pm 2.3}$ & $72.69_{\pm 0.9}$ & $3.9$ \\
    \addlinespace
    UniGCNII & $64.29_{\pm 2.9}$ & $56.97_{\pm 1.2}$ & $68.92_{\pm 3.4}$ & $68.55_{\pm 2.5}$ & $74.05_{\pm 1.5}$ & $\underline{97.56_{\pm 0.1}}$ & $88.31_{\pm 1.9}$ & $32.04_{\pm 1.5}$ & $\mathbf{77.59_{\pm 2.0}}$ & $\underline{76.05_{\pm 0.9}}$ & $4.4$ \\
    \addlinespace
    AllDeepSets & $62.37_{\pm 3.7}$ & $57.78_{\pm 1.8}$ & $67.10_{\pm 2.2}$ & $65.93_{\pm 1.3}$ & $67.60_{\pm 2.5}$ & $96.69_{\pm 0.3}$ & $77.65_{\pm 5.8}$ & $43.97_{\pm 2.4}$ & $73.05_{\pm 2.2}$ & $73.29_{\pm 2.5}$ & $6.4$ \\
    \addlinespace
    AllSetTransformer & $63.65_{\pm 1.6}$ & $\underline{59.49_{\pm 2.1}}$ & $69.70_{\pm 2.3}$ & $69.82_{\pm 2.2}$ & $74.22_{\pm 1.2}$ & $96.02_{\pm 0.2}$ & $89.20_{\pm 0.5}$ & $49.29_{\pm 1.8}$ & $\underline{76.14_{\pm 3.8}}$ & $75.58_{\pm 2.7}$ & $\underline{3.8}$ \\
    \addlinespace
    ED-HNN & $64.24_{\pm 3.0}$ & $57.27_{\pm 1.6}$ & $69.56_{\pm 3.4}$ & $69.29_{\pm 1.6}$ & $70.92_{\pm 1.9}$ & $96.69_{\pm 0.2}$ & $\underline{90.11_{\pm 2.1}}$ & $\underline{50.61_{\pm 3.3}}$ & $72.30_{\pm 6.7}$ & $71.27_{\pm 3.7}$ & $4.8$ \\
    \midrule
    \midrule 
    \textbf{ZEN (proposed)} & $\mathbf{67.87_{\pm 1.4}}$ & $\mathbf{64.00_{\pm 1.4}}$ & $\mathbf{71.32_{\pm 1.6}}$ & $\mathbf{71.99_{\pm 2.0}}$ & ${73.74_{\pm 1.2}}$ & $\mathbf{98.21_{\pm 0.1}}$ & $88.04_{\pm 2.9}$ & $\mathbf{51.30_{\pm 1.1}}$ & ${74.10_{\pm 5.2}}$ & $\mathbf{76.70_{\pm 0.4}}$ & $\textbf{2.0}$ \\  
    \bottomrule
  \end{tabular}
  }
\end{table}

\subsection{Evaluation against additional baselines}
\label{appendix:acc_linear}

We have additionally included eight baselines; three representative linear GNNs (SGC \cite{pmlr-v97-wu19e}, APPNP \cite{gasteiger2022predictpropagategraphneural}, and SSGC \cite{zhu2021simple}), three linearized hypergraph models based on UniGCNII \cite{huang2021unignnunifiedframeworkgraph}, AllDeepSets \cite{chien2022allsetmultisetfunctionframework}, and ED-HNN \cite{wang2023equivarianthypergraphdiffusionneural}, and two semi-supervised hypergraph models (LEGCN \cite{yang2022semi} and HyperND \cite{tudisco2021nonlinear}). LEGCN utilizes a line expansion approach to adapt hypergraphs to conventional GNN architectures, while HyperND introduces a diffusion-based mechanism for improved label propagation in hypergraphs. For GNNs, we applied clique expansion to convert the hypergraph into a pairwise graph. As shown in the Table~\ref{tab:additional-models}, \method demonstrates consistently strong performance, outperforming all linear GNN and linearized hypergraph baselines on the majority of datasets.

\begin{table}[htp!]
  \caption{Comparison of classification accuracy (\%) with eight additional baselines for 5-shot node classification on real-world hypergraphs. We report the mean and standard deviation over 10 runs. Boldfaced letters indicate the best accuracy, and underlined letters indicate the second.
  \method achieves the highest average rank.}
  \label{tab:additional-models}
  \centering
  \setlength{\defaultaddspace}{0.5ex}
  \resizebox{\textwidth}{!}{
  \begin{tabular}{l|cccccccccc|c}
    \toprule
    
     \textbf{Methods} & \textbf{Cora} & \textbf{Citeseer} & \textbf{Pubmed} & \textbf{Cora\_CA} & \textbf{20News} &
     \textbf{MN40}&\textbf{Congress}&\textbf{Wallmart} &\textbf{Senate}&\textbf{House}
     &\begin{tabular}[c]{@{}c@{}}\textbf{Avg.}\\\textbf{Rank}\end{tabular}  \\
    \midrule
    \midrule
    SGC & $44.09_{\pm 9.8}$ & $40.42_{\pm 5.3}$ & $57.19_{\pm 5.5}$ & $50.79_{\pm 5.9}$ & $58.06_{\pm 5.1}$ & $91.58_{\pm 0.7}$ & $72.48_{\pm 3.1}$ & $24.11_{\pm 3.2}$ & $51.26_{\pm 2.7}$ & $52.27_{\pm 1.4}$ & $5.9$ \\
    \addlinespace
    APPNP & $\underline{45.77_{\pm 9.5}}$ & $39.55_{\pm 4.7}$ & $55.38_{\pm 7.1}$ & $50.04_{\pm 6.7}$ & $59.90_{\pm 5.8}$ & $92.75_{\pm 0.3}$ & $69.09_{\pm 6.8}$ & $24.70_{\pm 3.3}$ & $71.24_{\pm 5.4}$ & $70.00_{\pm 8.4}$ & $5.8$ \\
    \addlinespace
    SSGC & $42.60_{\pm 10.9}$ & $40.69_{\pm 4.7}$ & $57.07_{\pm 4.9}$ & $\underline{52.55_{\pm 6.7}}$ & $59.91_{\pm 5.2}$ & $93.75_{\pm 0.2}$ & $81.10_{\pm 2.4}$ & $\underline{26.56_{\pm 3.4}}$ & $\underline{73.00_{\pm 0.7}}$ & $71.74_{\pm 3.2}$ & $\underline{3.3}$ \\
    \addlinespace
    lin UniGCNII & $39.44_{\pm 7.4}$ & $40.00_{\pm 4.0}$ & $56.85_{\pm 4.9}$ & $49.75_{\pm 6.1}$ & $59.45_{\pm 5.2}$ & $96.58_{\pm 0.4}$ & $73.35_{\pm 3.1}$ & $17.91_{\pm 1.7}$ & $71.22_{\pm 10.8}$ & $68.23_{\pm 10.6}$ & $5.8$ \\
    \addlinespace
    lin AllDeepSets & $40.49_{\pm 8.2}$ & $\underline{40.76_{\pm 4.5}}$ & $56.13_{\pm 6.0}$ & $51.04_{\pm 5.8}$ & $\underline{66.62_{\pm 5.0}}$ & $96.62_{\pm 0.3}$ & $\mathbf{90.39_{\pm 1.5}}$ & $26.45_{\pm 3.0}$ & $60.95_{\pm 4.7}$ & $65.13_{\pm 13.1}$ & $4.3$ \\
    \addlinespace
    lin ED-HNN & $41.97_{\pm 5.6}$ & $39.64_{\pm 4.2}$ & $55.82_{\pm 5.4}$ & $47.37_{\pm 5.8}$ & $61.47_{\pm 6.0}$ & $\underline{97.28_{\pm 0.4}}$ & $78.35_{\pm 2.7}$ & $18.84_{\pm 1.6}$ & $71.10_{\pm 9.9}$ & $70.20_{\pm 9.3}$ & $5.2$ \\
    \addlinespace
    LEGCN & $37.59_{\pm 5.2}$ & $37.25_{\pm 3.8}$ & $\underline{58.11_{\pm 4.0}}$ & $37.59_{\pm 5.2}$ & $49.41_{\pm 4.3}$ & $93.27_{\pm 0.7}$ & $72.14_{\pm 3.4}$ & O.O.M & $71.25_{\pm 8.5}$ & $\underline{72.85_{\pm 7.0}}$ & $6.5$ \\
    \addlinespace
    HyperND & $39.09_{\pm 6.2}$ & $35.26_{\pm 4.8}$ & $56.54_{\pm 4.5}$ & $41.74_{\pm 4.6}$ & $54.70_{\pm 3.8}$ & $91.40_{\pm 0.5}$ & $73.57_{\pm 4.9}$ & $13.55_{\pm 1.7}$ & $\mathbf{73.88_{\pm 7.5}}$ & $72.84_{\pm 7.1}$ & $6.5$ \\
    \midrule
    \midrule 
    \textbf{ZEN (proposed)} & $\mathbf{51.85_{\pm 10.1}}$ & $\mathbf{49.08_{\pm 4.8}}$ & $\mathbf{62.62_{\pm 3.9}}$ & $\mathbf{60.04_{\pm 6.2}}$ & $\mathbf{68.57_{\pm 4.8}}$ & $\mathbf{97.63_{\pm 0.3}}$ & $\underline{86.96_{\pm 4.8}}$ & $\mathbf{43.88_{\pm 3.1}}$ & ${70.40_{\pm 10.0}}$ & $\mathbf{73.22_{\pm 6.3}}$ & $\textbf{1.7}$ \\  
    \bottomrule
  \end{tabular}
  }
\end{table}

\subsection{Evaluation under standard \textit{n}-way \textit{k}-shot setting}
\label{appendix:n-wayk-shot}

To examine the generality of our approach, we further evaluate it under the standard \textit{n}-way \textit{k}-shot setting. To the best of our knowledge, this setup has not been explicitly explored for general hypergraphs. Accordingly, we designed a new evaluation protocol by drawing inspiration from the experimental setup of Meta-GNN \cite{10.1145/3357384.3358106} and adopting hyperparameter configurations aligned with those used in ED-HNN ~\cite{wang2023equivarianthypergraphdiffusionneural}. The results, summarized in Table~\ref{tab:fewshot_citation}, demonstrate the strong performance of \method in this setting, underscoring its effectiveness for few-shot learning on hypergraphs. Notably, \method achieves competitive results even when applied solely to test episodes, revealing a promising and underexplored direction for future research.

\begin{table}[htp!]
  \caption{Classification accuracy (\%) under the standard \textit{n}-way \textit{k}-shot setting. Results are averaged over 10 runs. \method achieves superior performance across all configurations.}
  \label{tab:fewshot_citation}
  \centering
  \resizebox{\textwidth}{!}{
  \begin{tabular}{l|cccc}
    \toprule
    \textbf{Methods} & \textbf{Cora (2-way 3-shot)} & \textbf{Cora (2-way 1-shot)} & \textbf{Citeseer (2-way 3-shot)} & \textbf{Citeseer (2-way 1-shot)} \\
    \midrule
    \midrule
    HGNN & $64.4_{\pm 0.1}$ & $55.9_{\pm 0.1}$ & $60.1_{\pm 0.2}$ & $54.9_{\pm 0.2}$ \\
    \addlinespace
    UniGCNII & $68.0_{\pm 0.1}$ & $58.9_{\pm 0.2}$ & $65.2_{\pm 0.1}$ & $56.8_{\pm 1.6}$ \\
    \addlinespace
    AllDeepSets & $52.3_{\pm 0.2}$ & $48.7_{\pm 0.2}$ & $51.1_{\pm 0.2}$ & $50.2_{\pm 0.2}$ \\
    \midrule
    \midrule
    \textbf{ZEN (proposed)} & $\mathbf{73.3_{\pm 0.1}}$ & $\mathbf{62.6_{\pm 0.1}}$ & $\mathbf{71.5_{\pm 0.1}}$ & $\mathbf{62.3_{\pm 0.1}}$ \\
    \bottomrule
  \end{tabular}
  }
\end{table}

\section{Running times}
The actual running times are reported in Table~\ref{tab:real-time}, with all values measured in seconds. \method runs in less than a second across all datasets, achieving as fast as 0.003s on the Senate dataset.

\begin{table}[htp!]
\caption{The actual running time of \method and the baseline models, including both training and inference. Each time is measured in seconds. \method exhibits a significant speed advantage over all baselines.}
\label{tab:real-time}
\centering
\resizebox{\textwidth}{!}{
\begin{tabular}{l|cccccccccc}
\toprule
\textbf{Methods} & \textbf{Cora} & \textbf{Citeseer} & \textbf{Pubmed} & \textbf{Cora\_CA} & \textbf{20News} & \textbf{MN40} & \textbf{Congress} & \textbf{Walmart} & \textbf{Senate} & \textbf{House} \\
\midrule \midrule
HGNN & $2.301$ & $2.800$ & $2.423$ & $2.506$ & $2.499$ & $3.958$ & $4.950$ & $20.277$ & $2.540$ & $2.543$ \\
\addlinespace
HNHN & $2.023$ & $1.996$ & $2.877$ & $2.024$ & $3.676$ & $2.720$ & $3.925$ & $14.435$ & $2.278$ & $2.267$ \\
\addlinespace
HCHA & $3.290$ & $3.485$ & $4.448$ & $2.850$ & $2.894$ & $4.072$ & $6.675$ & $8.558$ & $3.297$ & $4.584$ \\
\addlinespace
UniGCN & $4.553$ & $1.932$ & $6.693$ & $2.511$ & $6.825$ & $4.315$ & $8.497$ & $22.121$ & $2.342$ & $1.920$ \\
\addlinespace
UniGCNII & $4.193$ & $3.812$ & $2.069$ & $4.826$ & $4.025$ & $3.831$ & $7.472$ & $24.452$ & $2.277$ & $2.419$ \\
\addlinespace
AllDeepSets & $15.456$ & $18.331$ & $8.889$ & $16.108$ & $8.420$ & $9.378$ & $25.499$ & $60.619$ & $13.232$ & $11.461$ \\
\addlinespace
AllSetTransformer & $2.597$ & $3.167$ & $8.865$ & $3.462$ & $14.273$ & $4.924$ & $6.114$ & $51.702$ & $3.260$ & $3.435$ \\
\addlinespace
ED-HNN & $4.267$ & $4.458$ & $4.407$ & $6.588$ & $11.072$ & $5.664$ & $39.791$ & $31.137$ & $2.336$ & $2.487$ \\
\midrule \midrule
\textbf{ZEN (proposed)} & $\mathbf{0.266}$ & $\mathbf{0.736}$ & $\mathbf{0.807}$ & $\mathbf{0.280}$ & $\mathbf{0.237}$ & $\mathbf{0.197}$ & $\mathbf{0.093}$ & $\mathbf{0.672}$ & $\mathbf{0.003}$ & $\mathbf{0.007}$ \\
\bottomrule
\end{tabular}
}
\end{table}



%


\end{document}